\documentclass[12pt,a4paper,oneside]{article}

\usepackage{natbib}
\bibliographystyle{apalike}
\setcitestyle{citesep={,}}
\usepackage[utf8]{inputenc} 
\usepackage[T1]{fontenc}    
\usepackage{hyperref}       
\usepackage{url,soul}       
\usepackage{amsfonts}       
\usepackage{nicefrac}       
\usepackage{microtype}      
\usepackage{graphicx}
\usepackage{xcolor}
\usepackage[a4paper,left=2cm,right=2cm,top=2.5cm,bottom=2.5cm]{geometry}
\usepackage{placeins} 

\usepackage[USenglish]{babel}
\usepackage{color,ulem}
\usepackage{amsmath} 
\usepackage{amsthm} 
\usepackage{amssymb} 
\usepackage{bm} 

\definecolor{light-gray}{gray}{0.95}

\usepackage{mdframed}
\mdfdefinestyle{MyFrame}{%
    linecolor=black,
    outerlinewidth=1pt,
    roundcorner=5pt,
    innertopmargin=\baselineskip,
    innerbottommargin=\baselineskip,
    innerrightmargin=30pt,
    innerleftmargin=30pt,
    backgroundcolor=light-gray,
    startinnercode={\baselineskip=0.6cm}}

\normalem

\newtheorem{theorem}{Theorem}[section]

\newtheorem{lemma}[theorem]{Lemma}
\newtheorem{remark}{Remark}
\newtheorem{assumption}{Assumption}
\newtheorem{definition}{Definition}

\newcommand{\tk}{\theta_k}
\newcommand{\pk}{\psi_k}
\newcommand{\dt}{\dot{\theta}}
\newcommand{\ddp}{\dot{\psi}}
\newcommand{\ovb}{\frac{1}{\beta}}
\newcommand{\ma}{\mathsf{S}}
\newcommand{\ml}{\mathsf{L}}
\newcommand{\mb}{\mathsf{B}}
\newcommand{\N}{\mathbb{N}}
\usepackage{bbm}
\newcommand{\indic}{\mathbf{1}}
\newcommand{\J}{\mathcal{J}}
\newcommand{\R}{\mathbb{R}}

\newcommand*\diff{\mathop{}\!\mathrm{d}}
\newcommand*\dist{\mathop{}\!\mathrm{dist}}

\newcommand{\aaa}{(\alpha-\ovb)}
\newcommand{\bb}{\ovb}

\newcommand{\vv}{\frac{\dt(t)-\ddp(t)}{\beta}}

\newcommand{\new}[1]{{#1}} 

\DeclareRobustCommand{\VAN}[3]{#2} 

\title{An Inertial Newton Algorithm for Deep Learning}

\usepackage{lastpage}

\author{\textbf{Camille Castera}\footnote{Corresponding author: \texttt{camille.castera@protonmail.com}}\\
 IRIT, Universit\'e de Toulouse, CNRS\\
Toulouse, France
\and
\textbf{J\'er\^ome Bolte}$^\dagger$ \\
Toulouse School of Economics\\ Université de Toulouse\\
Toulouse, France
\and
\textbf{Cédric Févotte}$^\dagger$ \\
 IRIT, Universit\'e de Toulouse, CNRS\\
Toulouse, France
\and
\textbf{Edouard Pauwels}$^\dagger$ \\
IRIT, Universit\'e de Toulouse, CNRS\\
DEEL, IRT Saint Exupery\\
Toulouse, France
}


\begin{document}
\sloppy

	\maketitle
    
    \renewcommand*{\thefootnote}{$^\dagger$}
	\footnotetext[1]{Last three authors are listed in alphabetical order. }
    \renewcommand*{\thefootnote}{\arabic{footnote}}
    \setcounter{footnote}{0} 
    \begin{abstract}%
    We introduce a new second-order inertial optimization method for machine learning called INNA. It exploits the geometry of the loss function while only requiring stochastic approximations of the function values and the generalized gradients. This makes INNA fully implementable and adapted to large-scale optimization problems such as the training of deep neural networks. The algorithm combines both gradient-descent and Newton-like behaviors as well as inertia.
    We prove the convergence of INNA for most deep learning problems. To do so, we provide a well-suited framework to analyze deep learning loss functions involving tame optimization in which we study a continuous dynamical system together with its discrete stochastic approximations.  We prove sublinear convergence for the continuous-time differential inclusion which underlies our algorithm. Additionally, we also show how standard optimization mini-batch methods applied to non-smooth non-convex problems can yield a certain type of spurious stationary points never discussed before. We address this issue by providing a theoretical framework around the new idea of $D$-criticality; we then give a simple asymptotic analysis of INNA. Our algorithm allows for using an aggressive learning rate of $o(1/\log k)$. From an empirical viewpoint, we show that INNA returns competitive results with respect to state of the art (stochastic gradient descent, ADAGRAD, ADAM) on popular deep learning benchmark problems.
	\end{abstract}%

    \paragraph{Keywords.}
        deep learning, non-convex optimization, second-order methods, dynamical systems, stochastic optimization

	\section{Introduction}

	Can we devise a learning algorithm for general deep neural networks (DNNs) featuring inertia and Newtonian directional intelligence only by means of backpropagation? In an optimization jargon: can we use second-order ideas in time and space for {\em non-smooth non-convex} optimization by only using a subgradient oracle?
    Before providing some answers to this  question, let us have a glimpse at some fundamental optimization algorithms for training DNNs.

	The backpropagation algorithm  is, to this day, the fundamental block to compute gradients in deep learning (DL). It is used in most instances of the Stochastic Gradient Descent (SGD) algorithm  \citep{robbins1951stochastic}. The latter is powerful, flexible, capable of handling large-size problems, noise, and further comes with theoretical guarantees of many kinds. We refer to \cite{bottou2008tradeoffs,moulines2011non} in a convex machine learning context and \cite{bottou2018optimization} for a recent account highlighting the importance of DL applications and their challenges. In the non-convex setting, recent works of \cite{adil,davis2018stochastic} follow the \textit{Ordinary Differential Equations (ODE) approach} introduced in \cite{ljung1977analysis}, and further developed in \cite{benaim1999dynamics,kushner2003stochastic,benaim2005stochastic,borkar2009stochastic}. 
	Two research directions have been explored in order to improve SGD's training efficiency:
    \begin{itemize}
    \renewcommand\labelitemi{--}
        \item using local geometry of empirical loss functions to obtain {steeper} descent directions,
        \item using past steps history to design {larger step-sizes} in the present.
    \end{itemize}
	The first approach is akin to quasi-Newton methods while the second revolves around Polyak's inertial method \citep{polyak1964some}.
	The latter is inspired by the following appealing mechanical thought-experiment. Consider a heavy ball evolving on the graph of the loss function (the loss function's \textit{landscape}), subject to gravity and stabilized by some friction effects. Friction generates energy dissipation, so that the particle will eventually reach a steady state which one hopes to be a local minimum. These two approaches are already present in the DL literature: among the most popular algorithms for training DNNs, ADAGRAD \citep{duchi2011adaptive} features  local geometrical aspects while ADAM \citep{kingma2014adam} combines inertial ideas with step-sizes similar to the ones of ADAGRAD. Stochastic Newton and quasi-Newton algorithms have been considered by \cite{martens2010deep,byrd2011use,byrd2016stochastic} and recently reported performing efficiently on several problems \citep{berahas2017investigation,xu2017second}. The work of \cite{wilson2017marginal} demonstrates that carefully tuned SGD and heavy-ball algorithms are competitive with concurrent methods.

	However, deviating from the simplicity of SGD also comes with major challenges because of {the high dimensionality of DL problems} and the severe absence of regularity in DL (differential regularity is generally absent, but even weaker regularity such as semi-convexity or Clarke regularity are not {always} available). All sorts of practical and theoretical hardships are met: {computing and even defining the Hessian is delicate}, inverting them is unthinkable to this day, first and second-order Taylor approximations are unavailable due to non-smoothness, and finally one has to deal with ``shocks'' which are inherent to inertial approaches in a non-smooth context (``corners'' and ``walls'' {in the landscape} of the loss function generate velocity discontinuity). This makes {in particular the study} of the popular algorithms ADAGRAD and ADAM in full generality quite difficult, with recent progresses reported in \cite{barakat2018convergence}.

 Our approach is inspired by the following {continuous-time} dynamical system introduced in \cite{alvarez2002second} and referred to as DIN (standing for ``dynamical inertial Newton''):
 \begin{align}
 \label{eq:physicalIntuitionSmooth}
 \underbrace{\ddot\theta(t)}_{\text{\rm Inertial term}}+\underbrace{\alpha\,\dot\theta(t)}_{\text{\rm Friction term}}+\underbrace{\beta\,\nabla^2\J(\theta(t))\dot\theta(t)}_{\text{\rm Newtonian effects}}+\underbrace{\nabla \J(\theta(t))}_{\text{\rm Gravity effect}}= \ 0, \quad \text{for $t\in [0,+\infty)$},
 \end{align}
 where $t$ is the time parameter which acts as a continuous epoch counter, $\J$ is a given loss function (e.g., the empirical loss in DL applications), for now assumed $C^2$ (twice-differentiable), with gradient $\nabla \J$ and Hessian $\nabla^2\J$. \new{It can be shown that solutions to \eqref{eq:physicalIntuitionSmooth} converge to critical points of $\cal{J}$ \citep{alvarez2002second}. As such the discretization of \eqref{eq:physicalIntuitionSmooth} can support the design of algorithms that optimize $\cal{J}$ and that leverage inertial properties with Newton's method. To adapt this dynamics to DL we must first overcome the computational or conceptual difficulties raised by the second-order objects $\ddot \theta$ and $\nabla^2\J(\theta)$ appearing in \eqref{eq:physicalIntuitionSmooth}. To do this, we propose in this paper to combine a phase-space lifting method introduced in \cite{alvarez2002second} with the use of Clarke subdifferential $\partial  \J$. Clarke subdifferential defines a notion of differentiability for non-convex and non-smooth functions. This approach  results in the study of a first-order differential inclusion in place of \eqref{eq:physicalIntuitionSmooth}, namely,
\begin{equation}
		\begin{cases}
		\dt(t) + \beta  \partial \J(\theta(t)) &+(\alpha -\ovb)\theta(t) + \ovb \psi(t) \ni 0\\
		\ddp(t) &+(\alpha -\ovb)\theta(t) + \ovb \psi(t) \ni 0  \end{cases} \mbox{,\quad for a.e. $t\in (0,+\infty)$}.
		\end{equation}
This differential inclusion can then be discretized to obtain the practical algorithm INNA that is introduced in Section~\ref{sec:algo}, together with a rigorous presentation of the concepts mentioned above.}

\new{Computation of the (sub)gradients and convergence proofs (in batch or mini-batch settings) typically rely on the sum-rule in smooth or convex settings, i.e., $\partial ({\cal J}_1 + {\cal J}_2) = \partial{\cal J}_1 + \partial{\cal J}_2$. Unfortunately this sum-rule does not hold in general in the non-convex setting using the standard Clarke subdifferential. Yet, many DL studies ignore the failure of the sum rule: they use it in practice, but circumvent the theoretical problem by modeling their method through simple dynamics (e.g., smooth or convex). We tackle this difficulty as is, and show that such practice  can create additional spurious stationary points that are not Clarke-critical. To address this question, we introduce the notion of $D$-criticality. It is less stringent than Clarke-criticality and   it describes more accurately real-world implementation. We  then show convergence of INNA to such $D$-critical points. Our theoretical results are general, simple and allow for aggressive step-sizes in $o(1/\log k)$.} {We first provide adequate calculus rules and tame non-smooth Sard's-like results for the new steady states we introduced. We then combine these results with a Lyapunov analysis from \cite{alvarez2002second} and} the differential inclusion approximation method \citep{benaim2005stochastic} to characterize the asymptotics of our algorithm similarly to \cite{davis2018stochastic,adil}. This provides a strong theoretical ground to our study since we can prove that our method converges to a connected component of the set of steady states {even for networks with ReLU or other non-smooth activation functions}. For the smooth deterministic dynamics, we also show that convergence in values is of the form $O(1/t)$ where $t$ is the running time. 
For doing so, we provide a general result  for the solutions of a family of differential inclusions having a certain type of favorable Lyapunov functions.

Our algorithm INNA shows great efficiency in practice. It has similar computational complexity to state-of-the-art methods SGD, ADAGRAD and ADAM, often achieves better training accuracy and shows good robustness to hyper-parameters selection. INNA can avoid parasitic oscillations and produce acceleration; a first illustration of the behavior of the induced dynamics is given in Figure \ref{fig::rosenbrockexp} for a simple non-smooth and non-convex function in $\R^2$.

The rest of the paper is organized as follows. INNA is introduced in details in Section \ref{sec:algo} and its convergence is established in Section \ref{sec:proof}. Convergence rates of the underlying continuous-time differential inclusion are obtained in Section~\ref{sec::rateofCV}. Section \ref{sec:numerics} describes experimental DL results on synthetic and real data sets (MNIST, CIFAR-10, CIFAR-100).

	\begin{figure}[t]
    \begin{center}%
    \begin{tabular}{cc}
       \includegraphics[width=0.47\textwidth]{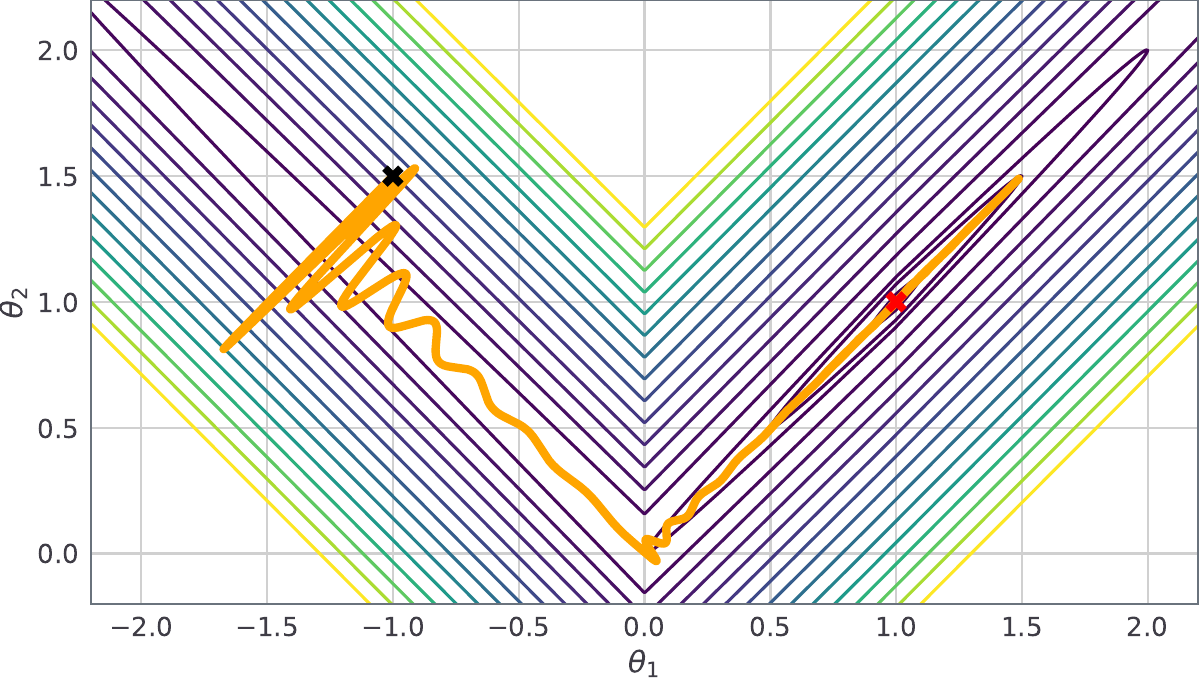} & \includegraphics[width=0.47\textwidth]{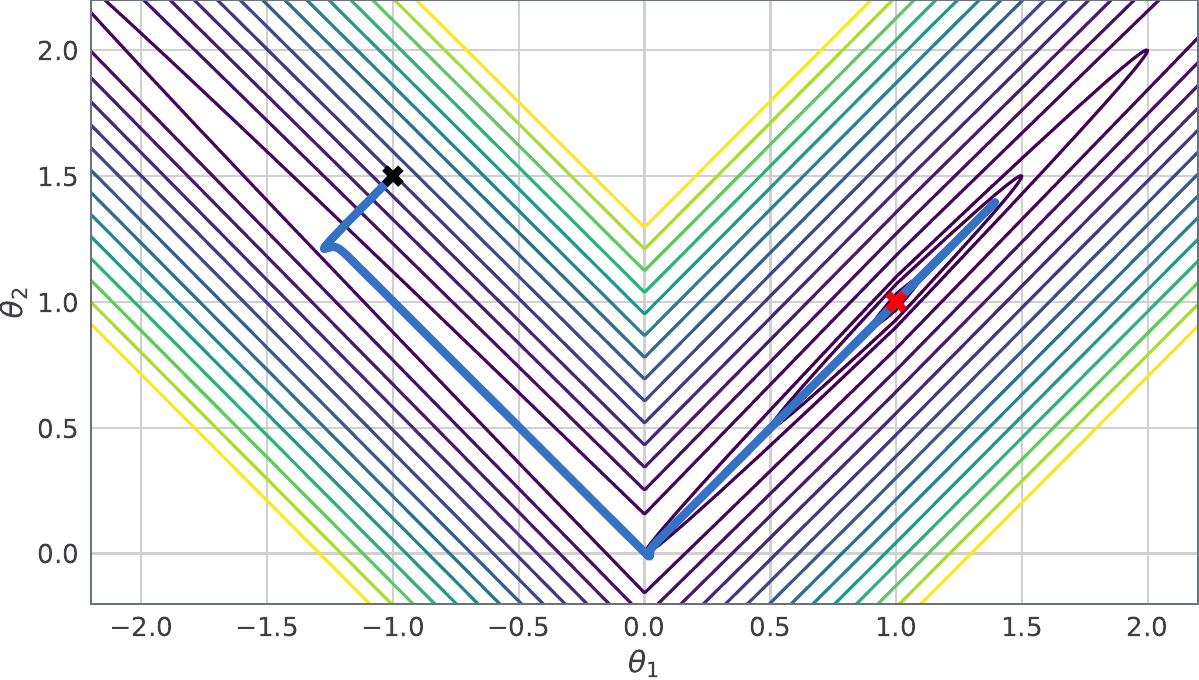}\\
       (a) $\alpha=0.5,\ \beta=0.01$  & (b) $\alpha=0.5,\ \beta=0.1$ \medskip \\
       \includegraphics[width=0.47 \textwidth]{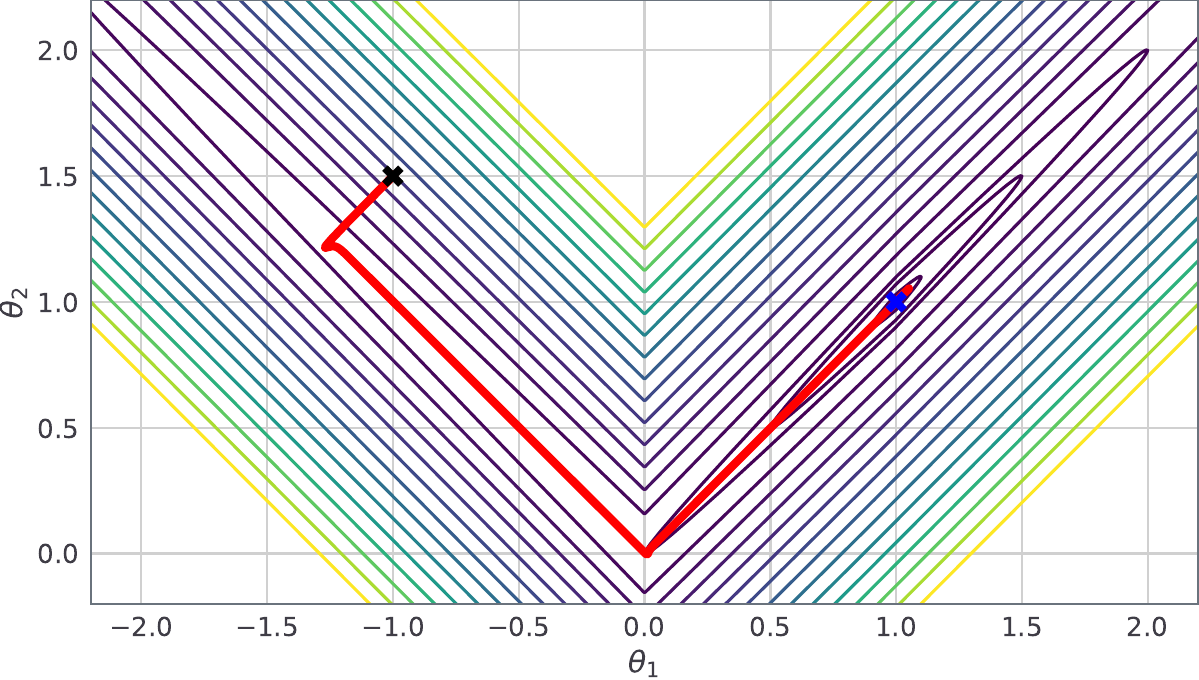} & \includegraphics[width=0.47\textwidth]{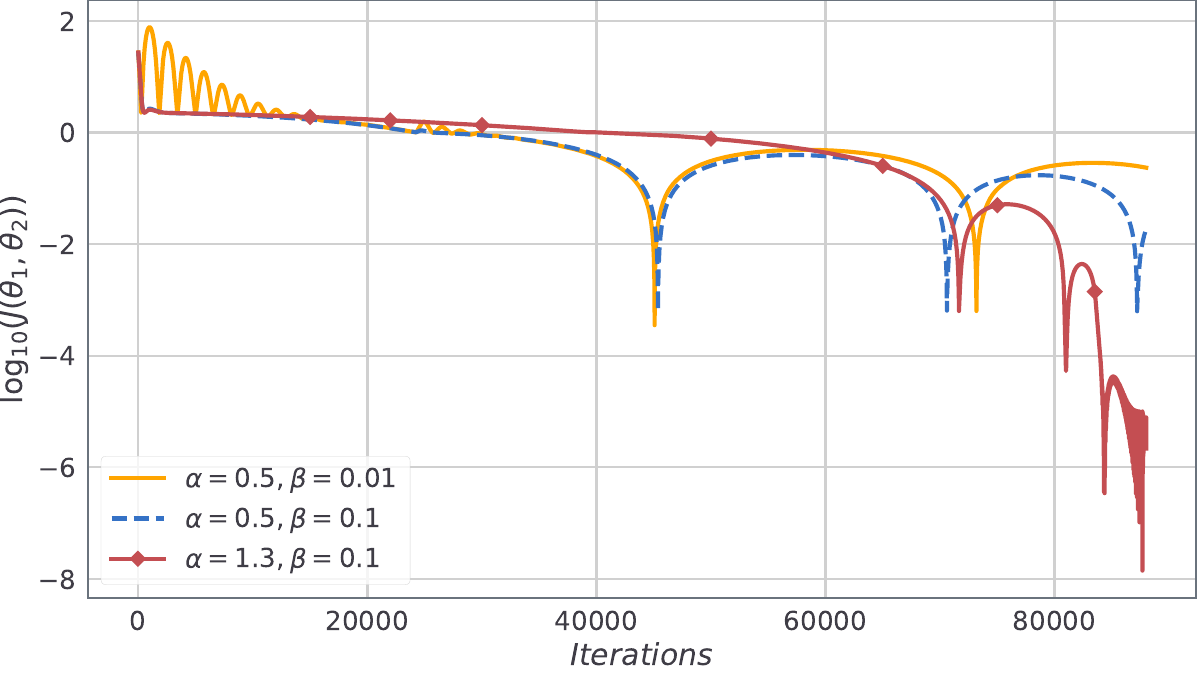}\\
       (c) $\alpha=1.3,\ \beta=0.1$ & (d) Objective function $\J$ (in log-scale)
    \end{tabular}
    \caption{Illustration of INNA applied to the non-smooth function $\J(\theta_1,\theta_2)= 100(\theta_2-\vert \theta_1 \vert)^2 +\vert 1-\theta_1\vert $. Subplots (a-c) represent the trajectories of the parameters $\theta_1$ and $\theta_2$ in $\R^2$ for three choices of hyper-parameters $\alpha$ and $\beta$, see \eqref{eq:physicalIntuitionSmooth} for an intuitive explanation. Subplot (d) displays the values of the objective function $\J(\theta_1,\theta_2)$ for the three settings considered.}\label{fig::rosenbrockexp}%
    \end{center}%
    \end{figure}

\section{INNA: an Inertial Newton Algorithm for Deep Learning}
	\label{sec:algo}
	We first introduce our functional framework, then we describe step by step the process of building INNA from \eqref{eq:physicalIntuitionSmooth}.
	\subsection{Neural Networks with Lipschitz Continuous Prediction Function and Loss Functions}\label{sec::structofDL}
    
    We consider DNNs of a general type represented by a function $f:(x,\theta)\in \R^M\times \R^P\mapsto y\in \R^D$ that is {\em locally Lipschitz continuous} in $\theta$. This includes for instance feed-forward, convolutional or residual networks used with ReLU, sigmoid, or tanh activation functions. Recall that a function $F:\R^P\to \R$ is {locally Lipschitz continuous}, if for any $\theta \in\R^P$, there exists a neighborhood $V$ of $\theta$ and a constant $C>0$ such that for any $\theta_1,\theta_2\in V$,
        \begin{equation*}
            \vert F(\theta_1) - F(\theta_2)\vert \leq C \left\|\theta_1 - \theta_2\right\|,
        \end{equation*}
        where $\left\| \cdot \right\|$ is any norm on $\R^P$. A function $F:\R^P\to \R^D$ is locally Lipschitz continuous if each of its coordinates is locally Lipschitz continuous. The variable $\theta \in \R^P$ is the parameter of the model ($ P $ can be very large), while $ x\in \R^M $ and $y\in \R^D$ represent input and output data. For instance, the vector $ x $ may embody an image while $y$ is a label explaining its content. Consider further a data set of $ N$ samples $ (x_n,y_n)_{n=1,\ldots,N} $. Training the network amounts to finding a value of the parameter $ \theta $ such that, for each input data $ x_n $ of the data set, the output $ f(x_n,\theta)$ of the model predicts the real value $ y_n $ with good accuracy. To do so, we follow the traditional approach of minimizing an empirical risk loss function, \begin{equation}\label{eq::loss}
	\R^P\ni\theta \mapsto \J(\theta)=\sum_{n=1}^N l(f(x_n,\theta),y_n),
	\end{equation} where $l:\R^D\times \R^D\to \R$ is a locally Lipschitz continuous dissimilarity measure. \new{In the sequel, for $n\geq1$, we will sometimes denote by $\J_n$ the $n$-th term of the sum: $\J_n(\theta) \triangleq l(f(x_n,\theta),y_n)$, so that $\J=\sum_{n=1}^N \J_n$. Despite the non-smoothness and the non-convexity of typical DL loss functions, they generally possess a very strong  property sometimes called tameness. We now introduce this notion which is essential to obtain the convergence results of Section~\ref{sec:proof}.}

	\subsection{Neural Networks and Tameness in a Nutshell}\label{sec::favstruct}

Tameness  refers to a ubiquitous geometrical property of loss functions and constraints encompassing most finite dimensional optimization problems met in practice. Prominent classes of tame objects are piecewise-linear or piecewise-polynomial objects (with finitely many pieces), and more generally, semi-algebraic objects. However, the notion is much more general, as we intend to convey below. The formal definition is given at the end of this subsection (Definition~\ref{DEF}).

Informally, sets or functions are called  tame  when they can be described by a finite number of basic formulas, inequalities, or Boolean operations involving standard functions such as polynomial, exponential, or max functions. We refer to \cite{attouch} for illustrations, recipes and examples within a general optimization setting or  \cite{davis2018stochastic} for illustrations in the context of neural networks. The reader is referred to \cite{van1998tame,coste2000introduction,shiota} for foundational material. To apprehend the strength behind tameness it is convenient to remember that it models non-smoothness by confining the study to sets and functions which are union of smooth pieces. This is the so-called {\em stratification} property of tame sets and functions. It was this property which motivated the term of {\em tame topology},\footnote{``{\em La topologie mod\'er\'ee}'' wished for by Grothendieck.} see \cite{van1998tame}. In a non-convex optimization settings, the stratification property is crucial to generalize qualitative algorithmic results to non-smooth objects.

{\em Most finite dimensional DL optimization models we are aware of yield tame loss functions $\J$}. To understand this assertion and illustrate the wide scope of tameness assumptions, let us provide concrete examples (see also \citealt{davis2018stochastic}).
Assume that the DNNs under consideration are built from the following traditional components:
\begin{itemize}
    \renewcommand\labelitemi{--}
    \item the network architecture describing $ f $ is fixed with an arbitrary number of layers of arbitrary dimensions and arbitrary Directed Acyclic Graph (DAG) representing computations,
\item  the activation functions are among classical ones: ReLU, sigmoid, SQNL, RReLU, tanh, APL, soft plus, soft clipping, and many others including multivariate activation functions (norm, sorting), or activation functions defined piecewise with polynomials, exponential and logarithm,
\item  the dissimilarity function $l$ is a standard loss such as $\ell_p$ norms, logistic loss or cross-entropy, or more generally a function defined piecewise using polynomials, exponentials and logarithms,
\end{itemize}
then one can easily show, by elementary quantifier elimination arguments (property (iii) below), that the corresponding loss, $\J$,  is tame.
\medskip

For the sake of completeness, we provide below the formal definition of tameness and o-minimality. 
\begin{definition}\label{DEF}
{\rm {\bf [o-minimal structure] }}{\rm
\cite[Definition\,1.5]{coste2000introduction}} \label{Domin}\rm{An {\it
o-minimal } structure on $(\R,+,.)$ is a countable collection of sets ${\cal O}=\{\mathcal{O}_{q}\}_{q\geq 1}$ where each $\mathcal{O}_{q}$ is itself a collection of subsets of $\R^q$, called {\em definable} subsets. They must have the following properties,  for each $q\geq 1$:}
\begin{enumerate}\itemsep=1mm
\item[(i)] (Boolean properties) $\mathcal{O}_{q}$ contains the empty set, is stable by finite union, finite intersection and complementation;
\item[(ii)] (Lifting property) 
{\rm if $A$ belongs to $\mathcal{O}_{q}$, then $A\times\R$ and
$\R\times A$ belong to $\mathcal{O}_{q+1}$.}
\item[(iii)] (Projection or quantifier elimination property) 
{\rm if $\Pi:\R^{q+1}\rightarrow\R^q$ is the canonical
projection onto $\R^q$ then for any $A$ in $\mathcal{O} _{q+1}$,
the set $\Pi(A)$ belongs to $\mathcal{O}_{q}$.}
\item[(iv)] (Semi-algebraicity)
{\rm $\mathcal{O}_{q}$ contains the family of algebraic
subsets of $\R^q$, that is, every set of the form
\[
\{\theta\in\R^q\mid\zeta(\theta)=0\},
\]
where $\zeta:\R^q\rightarrow\R$ is a polynomial function.}
\item[(v)] (Minimality property), 
{\rm  the elements of $\mathcal{O}_{1}$ are exactly the finite
unions of intervals and points. }
\end{enumerate}
\end{definition}

\noindent A mapping $F:S\subset \R^m\rightarrow \R^q$ is said to be
{\em definable in ${\cal O}$} if its graph is  definable in $\cal O$ as a subset of~$\R^m\times\R^q$. For illustration of o-minimality in the context of optimization one is referred to \cite{attouch,davis2018stochastic}.

\begin{center}
\fbox{\begin{minipage}{14cm}  From now on we fix an o-minimal structure $\mathcal{O}$ and a set or a mapping definable in $\cal O$ will be called {\em tame}.
\end{minipage}}
\end{center}

\subsection{From DIN to INNA}\label{sec::optpart}

        We describe in this section the construction of our proposed algorithm INNA from the discretization of the second-order ODE~\eqref{eq:physicalIntuitionSmooth}.

		\subsubsection{Handling Non-smoothness and Non-convexity}\label{sec::non-smoothnonconv}

		 We first show how the formalism offered by Clarke's subdifferential can be applied to generalize \eqref{eq:physicalIntuitionSmooth} to the non-smooth non-convex setting. Recall that the dynamical system \eqref{eq:physicalIntuitionSmooth} is described by,
		\begin{equation}\label{eq::secondorderDIN}
		\ddot{\theta}(t)+\alpha\dt(t) + \beta \nabla^2 \J(\theta(t))\dt(t) +\nabla \J(\theta(t))=0,
		\end{equation}
		where $\J$ is a twice-differentiable potential,  $\alpha>0$, $\beta>0$ are two hyper-parameters and $\theta: \mathbb{R}_+ \rightarrow \mathbb{R}^P$. We cannot exploit \eqref{eq::secondorderDIN} directly since in most DL applications $\J$ is not twice differentiable (and even not differentiable at all). We first overcome the explicit use of the Hessian matrix $\nabla^2 \J$ \new{by introducing an auxiliary variable $\psi: \mathbb{R}_+ \rightarrow \mathbb{R}^P$ like in \cite{alvarez2002second}. Consider the following dynamical system (defined for $\J$ merely differentiable),}
		\begin{equation}\label{eq::contdinSmooth}
		\begin{cases}
		\dt(t) + \beta  \nabla \J(\theta(t)) &+(\alpha -\ovb)\theta(t) + \ovb \psi(t) = 0\\
		\ddp(t) &+(\alpha -\ovb)\theta(t) + \ovb \psi(t) = 0  \end{cases} \mbox{,\quad for a.e. $t\in (0,+\infty)$}.
		\end{equation}
		\new{As explained in \cite{alvarez2002second}, \eqref{eq::secondorderDIN} is equivalent to \eqref{eq::contdinSmooth} when $\J$ is twice differentiable. Indeed, one can rewrite \eqref{eq::secondorderDIN} into \eqref{eq::contdinSmooth} by introducing $\psi = -\beta\dot{\theta}-\beta^2\nabla\J(\theta)-(\alpha\beta-1)\theta$. Conversely, one can substitute the first line of \eqref{eq::contdinSmooth} into the second one to retrieve \eqref{eq::secondorderDIN}.} Note however that \eqref{eq::contdinSmooth} does not require the existence of second-order derivatives.
		
		Let us now introduce a new {\em non-convex non-differentiable} version of \eqref{eq::contdinSmooth}.
		By Rademacher's theorem, locally Lipschitz continuous functions $\J:\R^P\to \R$ are differentiable almost everywhere. Denote by $\mathsf{R}$ the set of points where $\J$ is  differentiable. Then, $\R^P\setminus \mathsf{R}$ has zero Lebesgue measure. It follows that for any $\theta^\star\in\R^P\setminus \mathsf{R}$, there exists a sequence of points in $\mathsf{R}$ whose limit is this $\theta^\star$. This motivates the introduction of the subdifferential due to \cite{clarke1990optimization}, defined next.
		\begin{definition}[Clarke subdifferential of Lipschitz functions] \label{def:clarke}
			For any locally Lipschitz continuous function $F: \R^P\to \R$, the Clarke subdifferential of $F$ at $\theta\in\R^P$, denoted $\partial F(\theta)$, is the set defined by,
			\begin{equation}
				\partial F(\theta) = \mathrm{conv}\left\{ v\in\R^P \mid \exists (\theta_k)_{k\in\N}\in \mathsf{R}^\N,\text{ such that } \theta_k \xrightarrow[k\to\infty]{}\theta \text{ and }  \nabla F(\theta_k) \xrightarrow[k\to\infty]{} v \right\},
			\end{equation}
			where $\mathrm{conv}$ denotes the convex hull operator. \new{The elements of the Clarke subdifferential are called Clarke subgradients.}
		\end{definition}
		The Clarke subdifferential is a nonempty compact convex set. \new{It coincides with the gradient for smooth functions and with the traditional subdifferential for non-smooth convex functions. As already mentioned, and contrarily to the (sub)differential operator, it does not enjoy a sum rule.} 
		
		Thanks to Definition~\ref{def:clarke}, we can extend \eqref{eq::contdinSmooth} to non-differentiable functions. Since $\partial \J(\theta)$ is a set, we no longer study a differential equation but rather a {\em differential inclusion}, given by,
		 \begin{equation}\label{eq::contdinClarke}
		 \begin{cases}
		 \dt(t) + \beta  \partial \J(\theta(t)) &+(\alpha -\ovb)\theta(t) + \ovb \psi(t) \ni 0\\
		 \ddp(t) &+(\alpha -\ovb)\theta(t) + \ovb \psi(t) \ni 0  \end{cases} \mbox{,\quad for a.e. $t\in (0,+\infty)$}.
		 \end{equation}
		 For a given initial condition $(\theta_0,\psi_0)\in \R^P\times\R^P$, we call {\em solution} (or {\em trajectory}) of this system any absolutely continuous curve $(\theta,\psi)$ from $\R_+$ to $\R^P\times\R^P$ for which $(\theta(0),\psi(0)) = (\theta_0,\psi_0)$ and \eqref{eq::contdinClarke} holds. We recall that absolute continuity amounts to the fact that $\theta$ is differentiable almost everywhere with integrable derivative and,
		$$\theta(t) -\theta(0)=\int_0^t\dot\theta(s)\diff s, \mbox{ for $t\in [0,+\infty)$.}$$
		Due to the properties of the Clarke subdifferential, existence of a solution to differential inclusions such as \eqref{eq::contdinClarke} is ensured, see \cite{aubin}; note however that uniqueness of the solution does not hold in general. We will now use the structure of \eqref{eq::contdinClarke} to build a new algorithm to train DNNs.

		 \subsubsection{Discretization of the Differential Inclusion}

        To obtain the basic form of our algorithm, we discretize \eqref{eq::contdinClarke} according to the classical explicit Euler method. Given $(\theta,\psi)$ a solution of \eqref{eq::contdinClarke} and any time $t_k$, set $\theta_k = \theta(t_k)$ and $\psi_k = \psi(t_k)$. Then, at time $t_{k+1}=t_{k}+\gamma_k$ with $\gamma_k$ positive small, one can approximate $\dt(t_{k+1})$ and $\ddp(t_{k+1})$ by
        \[ \dt(t_{k+1})\simeq \frac{\theta_{k+1}-\theta_k}{\gamma_k},\quad \quad \ddp(t_{k+1})\simeq \frac{\psi_{k+1}-\psi_k}{\gamma_k}. \]
        This discretization yields the following algorithm,
        \begin{equation}\,\,\,\label{eq::notinna}
        	\begin{cases}
        	    v_k &\in \partial\J(\theta_k)\\
            	\theta_{k+1}&=  \theta_k + \gamma_k \left( (\frac{1}{\beta}-\alpha)\theta_k -\frac{1}{\beta}\psi_k - \beta v_k \right)\\
            	\psi_{k+1}& = \psi_k + \gamma_k \left( (\frac{1}{\beta}-\alpha)\theta_k - \frac{1}{\beta} \psi_k \right)
        	\end{cases}
        	\end{equation}

        Although the algorithm above is well-defined for our problem, \new{it is not suited to  DL. First, the computation of $\partial\J(\theta_k)$ is generally not possible since there is no general operational calculus for Clarke's subdifferential; secondly, a mini-batch strategy must be designed to cope with the large dimension of DL problems which makes the absence of sum rule even more critical.
        
        The next section is meant to address these issues and design  a practical algorithm.}

        \subsubsection{INNA Algorithm and a New Notion of Steady States}

        In order to compute or approximate the subdifferential of $\J$ at each iteration and to cope with large data sets, $\J$ can be approximated by mini-batches, reducing the memory footprint and computational cost of evaluation.  For any $\mb \subset \{1,\ldots, N\}$, let us define
	    \begin{align}\label{eq::minibatch}
	        \J_{\mb} \colon \theta \mapsto \sum_{n \in \mb} l(f(x_n,\theta),y_n).
	    \end{align}
        Unlike in the differentiable case, subgradients do not in general sum up to a subgradient of the sum, that is $\partial \J_\mb(\theta) \neq \sum_{n \in \mb} \partial l(f(x_n,\theta),y_n) $ in general. To see this, take for example $ 0 = |\cdot| - |\cdot|$, the Clarke subgradient of this function at $0$ is $\{0\}$, whereas $\partial(|0|) + \partial(-|0|) = [-1,1] + [-1,1] = [-2,2]$. \new{Standard DL solvers use backpropagation algorithms which implement smooth calculus on non-smooth and non-convex objects. Due to the absence of qualification conditions, {\em the resulting objects are not Clarke subgradients in general}.  In order to match the real-world practice of DL,} we introduce a notion of steady states that corresponds to the stationary points generated by a generic mini-batch approach. As we shall see, this  allows both for practical applications and convergence analysis (despite the sum rule failure for Clarke subdifferential). \new{ We emphasize once more that our goal is to {\em capture the  stationary points that are actually met in practice}.}
        
        For any $\mb \subset \{1,\ldots, N\}$, we introduce the following objects,
	    \begin{equation}
			D\J_{\mb}=\sum_{n\in \mb} \partial\left[  l(f(x_n,\cdot),y_n)\right], \quad D\J=\sum_{n=1}^N \partial\left[ l(f(x_n,\cdot),y_n)\right].
		\end{equation}
	    Observe that, for each $\mb$, we have $ D\J_{\mb}\supset \partial \J_{\mb} $ and that $\J_\mb$ is differentiable almost everywhere with $D\J_{\mb}= \partial \J_{\mb}=\{\nabla \J_{\mb}\}$, see \cite{clarke1990optimization}. In particular $D\J = \partial \J$ almost everywhere so that the potential differences with the Clarke subgradient occur on a negligible set. When $\J$ is tame the equality even hold on the complement of a finite union of manifolds of dimension strictly lower than $P$---use the classical stratification results for o-minimal structures, \cite{coste2000introduction}. A point satisfying $D\J(\theta)\ni 0$ will be called {\em $D$-critical}. \new{Note that Clarke-critical points ($0 \in \partial\J$) are  $D$-critical points but that the converse is not true.} This terminology is motivated by favorable properties: sum and chain rules along curves (see Lemmas \ref{lem::chainrule} and \ref{lem:chainRuleSum} below) and  the existence of a tame Sard's theorem (see Lemma \ref{lem:sard}). To our knowledge, this notion of a steady state has not previously been used in the literature and a direct approach modeling the mini-batch practice has never been considered before.\footnote{Following the first arXiv preprint of this work \citep{castera2019inertial}, \cite{BP} have further developed the present ideas and in particular the connection to the backpropagation algorithm.} While this notion is needed for the theoretical analysis, one should keep in mind that $D\J$ is actually what is computed numerically provided that the automatic differentiation library returns a Clarke subgradient. This computation is usually done with a backpropagation algorithm, similarly to the seminal method of \cite{rumelhart1986learning}.

	    Ultimately, one can rewrite \eqref{eq::contdinClarke} by replacing $\partial\J$ by $D\J$, which yields a differential inclusion adapted to study mini-batch approximations of non-smooth loss functions $\J$. This reads, 
	    \begin{equation}\label{eq::contdin}
	    \begin{cases}
	    \dt(t) + \beta  D \J(\theta(t)) &+(\alpha -\ovb)\theta(t) + \ovb \psi(t) \ni 0\\
	    \ddp(t) &+(\alpha -\ovb)\theta(t) + \ovb \psi(t) \ni 0  \end{cases} \mbox{,\quad for a.e. $t\in (0,+\infty)$}.
	    \end{equation}
	    Discretizing this system gives a workable version of INNA.
	    Let us consider a sequence $(\mb_k)_{k \in \N}$  of nonempty  subsets of $\{1,\ldots,N\}$, chosen independently and uniformly at random with replacement, and a sequence of positive step-sizes $(\gamma_k)_{k \in \N}$.
	    For a given initialization $(\theta_0,\psi_0)\in\R^P\times\R^P$, at iteration $k\geq 1$, our algorithm reads,
	    	\begin{equation}\hspace{-2cm}\textrm{(INNA)}\,\,\,\label{eq::discdin}
        	\begin{cases}
        	    v_k &\in D \J_{\mb_k}(\theta_k)\\
            	\theta_{k+1}&=  \theta_k + \gamma_k \left( (\frac{1}{\beta}-\alpha)\theta_k -\frac{1}{\beta}\psi_k - \beta v_k \right)\\
            	\psi_{k+1}& = \psi_k + \gamma_k \left( (\frac{1}{\beta}-\alpha)\theta_k - \frac{1}{\beta} \psi_k \right)
        	\end{cases}
        	\end{equation}
        Here again $\alpha > 0$ and $\beta > 0$ are hyper-parameters of the algorithm. {The mini-batch procedure forms} a stochastic approximation of the deterministic dynamics obtained by choosing $\mb_k=\{1,\ldots,N\}$, i.e., when $\J_{\mb_k}=\J$ (batch version). This can be seen by observing that the vectors $v_k$ above may be written as $v_k = \tilde{v}_k+\eta_k$, where $\tilde{v}_k \in D\J(\theta_k)$ and $\eta_k$ compensates for the missing subgradients and can be seen as a zero-mean noise. 	
	Hence, INNA admits the following general abstract stochastic formulation,
	\begin{equation}\,\,
        \begin{cases}
        	  w_k &\in D \J(\theta_k)\\
            \theta_{k+1}&=  \theta_k + \gamma_k \left( (\frac{1}{\beta}-\alpha)\theta_k -\frac{1}{\beta}\psi_k - \beta w_k +\xi_k\right)\\
            \psi_{k+1}& = \psi_k + \gamma_k \left( (\frac{1}{\beta}-\alpha)\theta_k - \frac{1}{\beta} \psi_k \right)
        	\end{cases}\label{eq::innag}
    \end{equation}
 where $(\xi_k)_{k \in \N}$ is a martingale difference noise sequence adapted to the filtration induced by (random) iterates up to $k$. While~\eqref{eq::discdin} is the version implemented in practice, its equivalent form~\eqref{eq::innag} is more convenient for the convergence analysis of the next section. We stress that the equivalence between \eqref{eq::discdin} and \eqref{eq::innag} relies on the use of $D\J$ and would not hold with the use of $\partial \J$ as in~\eqref{eq::notinna}.

INNA in its general and practical form is summarized in Table~\ref{tab:inna}.

 \begin{table}[ht]
    \centering
\begin{mdframed}[style=MyFrame]
 \begin{center}
 {\bf Inertial Newton Algorithm for Deep Learning (INNA)}
 \end{center}
\bigskip

\noindent
\textbf{Objective function:} $\J = \sum_{n = 1}^N \J_n$, with $\J_n\colon \R^P \mapsto \R$ locally Lipschitz.\\
\textbf{Hyper-parameters:} $(\alpha,\beta)$ positive.\\
\textbf{Mini-batches:} $(\mb_k)_{k \in \N}$, nonempty  subsets of $\{1,\ldots,N\}$.\\
\textbf{step-sizes:} $(\gamma_k)_{k \in \N}$ positive.\\
\textbf{Initialization:}  $(\theta_0,\psi_0)\in\R^P\times\R^P$.\\

\textbf{For $k\in \N$:}
\begin{equation*}
        	\begin{cases}
        	    v_k &\in\quad \sum_{n\in \mb_k} \partial\left[  \J_n(\theta_k)\right]\\
            	\theta_{k+1}&=\quad  \theta_k + \gamma_k \left( (\frac{1}{\beta}-\alpha)\theta_k -\frac{1}{\beta}\psi_k - \beta v_k \right)\\
            	\psi_{k+1}& =\quad \psi_k + \gamma_k \left( (\frac{1}{\beta}-\alpha)\theta_k - \frac{1}{\beta} \psi_k \right)
        	\end{cases}
\end{equation*}
\end{mdframed}
\caption{INNA in a nutshell. \label{tab:inna}}
\end{table}

    \section{Convergence Results for INNA}
\label{sec:proof}

    We first state our main result.
	\subsection{Main result: Accumulation Points of INNA are Critical}

We now study the convergence of INNA. The main idea here is to prove that the discrete algorithm~\eqref{eq::discdin} asymptotically behaves like the solutions of the continuous differential inclusion~\eqref{eq::contdin}. In addition to tameness, our main assumption is the following:
	\begin{assumption}[Stochastic approximation]\label{ass:itbound}
	    The sets $(\mb_k)_{k \in \N}$ are taken independently uniformly at random with replacement.
		The step-size sequence $\gamma_k$ is positive with $\sum_k\gamma_k=+\infty$ and satisfies $\gamma_k = o \left( \frac{1}{\log k}\right)$, that is  $\displaystyle\limsup_{k\to+\infty} |\gamma_k \log k|=0$.	       \label{ass:mainAssumption}
	\end{assumption}
Typical admissible choices are $\gamma_k=C(k+1)^{-a}$ with $a\in (0,1]$, $C > 0$. 
	The main theoretical result of the paper follows. 

	\begin{theorem}[INNA converges to the set of $D$-critical points of $\J$]\label{th::thmDIN}
		Assume that for \new{$n\in\{1,\ldots,N\}$, each $\J_n$ is locally Lipschitz continuous}, tame and that the step-sizes satisfy Assumption~\ref{ass:mainAssumption}. Set an initial condition $(\theta_0,\psi_0)$ and assume that there exists $M>0$ such that $\sup_{k\geq 0}\|(\tk,\pk)\|\leq M$ almost surely, where $(\theta_k,\psi_k)_{k\in\N}$ are generated by INNA.
	    Then, almost surely, any accumulation point $\bar\theta$ of the sequence $(\theta_k)_{k\in \N}$ satisfies $D\J(\bar\theta)\ni 0 $. In addition $ (\J(\theta_{k}))_{k\in\N}$ converges.	    
	\end{theorem}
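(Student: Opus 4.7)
The plan is to follow the differential inclusion ODE approach of Benaïm--Hofbauer--Sorin: prove that the (suitably interpolated) iterates of INDIAN are an asymptotic pseudotrajectory of the flow generated by \eqref{eq::contdin}, and then use a Lyapunov analysis inherited from the DIN dynamics of \cite{alvarez2002second} to show that the internally chain-transitive (ICT) invariant sets of this flow are contained in the set of $D$-critical equilibria.

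First, I would recast \eqref{eq::discdin} in the abstract stochastic form \eqref{eq::indiang}. Setting $w_k \in D\J(\theta_k)$ to be the unbiased mean obtained by summing over the whole data set, the shifted vector $\xi_k = \beta(w_k - v_k)$ is a martingale-difference term for the filtration of past iterates, thanks to the i.i.d.\ uniform sampling of $\mb_k$. Because $(\theta_k,\psi_k)$ is a.s.\ bounded, the iterates live in a compact set, so the noise is conditionally bounded and the set-valued field
\[ H(\theta,\psi) = \bigl(-\beta D\J(\theta) - \aaa\theta - \bb\psi,\; -\aaa\theta - \bb\psi\bigr) \]
has nonempty compact convex values and is upper semicontinuous (as a finite sum of Clarke subdifferentials of locally Lipschitz functions plus smooth affine maps). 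Under Assumption~\ref{ass:mainAssumption}, and in particular the aggressive schedule $\gamma_k = o(1/\log k)$, the Benaïm--Hofbauer--Sorin theorem then yields a.s.\ that the piecewise-affine interpolation of $(\theta_k,\psi_k)$ is an asymptotic pseudotrajectory of the flow associated with \eqref{eq::contdin}, so its limit set is a nonempty, compact, invariant, internally chain-transitive set of this flow.

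Second, I would carry out a Lyapunov analysis. Following \cite{alvarez2002second}, there should exist an energy of the form
\[ \E(\theta,\psi) = \J(\theta) + \tfrac{1}{2}\bigl\|\lambda_1 \theta + \lambda_2 \psi\bigr\|^2 \]
with $\lambda_1,\lambda_2$ calibrated against $\alpha,\beta$, such that along any absolutely continuous solution $(\theta,\psi)$ of \eqref{eq::contdin},
\[ \tfrac{\diff}{\diff t}\E(\theta(t),\psi(t)) \leq -c\,\|\ddp(t)\|^2 \quad \text{for a.e.\ } t. \]
The key tool here is the chain rule for tame locally Lipschitz functions against $D$-subgradients along absolutely continuous curves (Lemmas~\ref{lem::chainrule} and \ref{lem:chainRuleSum}), which yields $\tfrac{\diff}{\diff t}\J(\theta(t)) = \langle v(t),\dt(t)\rangle$ for every measurable selection $v(t) \in D\J(\theta(t))$. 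Any solution on which $\E$ is constant therefore satisfies $\ddp \equiv 0$, which forces $\aaa\theta + \bb\psi \equiv 0$ via the second line of \eqref{eq::contdin}, and then the first line gives $0 \in D\J(\theta)$ with $\theta$ constant.

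Third, the two parts combine as follows. On any ICT set $\Lambda$, the Lyapunov inequality and the fact that every point of $\Lambda$ is recurrent force $\E$ to be constant on $\Lambda$. The tame nonsmooth Sard theorem for $D\J$ (Lemma~\ref{lem:sard}) ensures that $\J(\{0 \in D\J\})$ has empty interior, which eliminates the degenerate possibility that $\E$ is constant on a continuum of non-equilibria; hence $\Lambda$ consists of equilibria, i.e.\ $\Lambda \subset \{(\theta,\psi) : 0 \in D\J(\theta),\ \aaa\theta + \bb\psi = 0\}$. Projecting onto the $\theta$-coordinate gives $0 \in D\J(\bar\theta)$ for every accumulation point $\bar\theta$, and constancy of $\E$ on the limit set, together with the fact that the quadratic correction vanishes at equilibria, yields convergence of $(\J(\theta_k))$.

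The main obstacle will be the Lyapunov differentiation step, because $D\J$ is strictly larger than $\partial\J$ and fails the Clarke sum rule, so one cannot invoke standard subgradient calculus to differentiate $\J(\theta(\cdot))$. This is precisely where tameness enters in an essential way: the stratified structure of tame locally Lipschitz maps makes every $D$-subgradient behave as a true gradient along absolutely continuous curves, which is what ultimately licenses the strict Lyapunov decrease and closes the argument.
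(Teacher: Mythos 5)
Your overall architecture is the same as the paper's: recast \eqref{eq::discdin} in the abstract form \eqref{eq::indiang}, use the a.s.\ boundedness of the iterates plus local Lipschitz continuity of the $\J_n$ to get a uniformly bounded martingale-difference noise, invoke Benaïm--Hofbauer--Sorin under the $o(1/\log k)$ step-size condition so that the interpolated process is an asymptotic pseudotrajectory of \eqref{eq::contdin} with internally chain-transitive limit set, and then rule out non-equilibrium limit sets via a DIN-type Lyapunov function together with the Sard-type Lemma~\ref{lem:sard}, which guarantees that the Lyapunov values on the equilibrium set $\ma$ have empty interior. The chain rule you invoke for $D\J$ along absolutely continuous curves is exactly Lemma~\ref{lem:chainRuleSum}, and the final projection/convergence-of-values step matches the paper.

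The gap is in the Lyapunov step, which you assert rather than verify and which, as stated, would fail. You posit $\frac{\diff}{\diff t}E(\theta(t),\psi(t))\le -c\,\Vert\ddp(t)\Vert^2$ and deduce that constancy of $E$ forces $\ddp\equiv 0$, hence $\aaa\theta+\bb\psi\equiv 0$, hence ``$0\in D\J(\theta)$ with $\theta$ constant''. The last inference breaks down when $\alpha\beta=1$: then $\alpha-\ovb=0$, the algebraic identity only yields $\psi\equiv 0$, and the first line of \eqref{eq::contdin} reduces to the subgradient flow $\dt(t)\in-\beta D\J(\theta(t))$, along which $\theta$ need not be stationary and $0\in D\J(\theta)$ does not follow; a decrease controlled by $\Vert\ddp\Vert^2$ alone cannot distinguish such trajectories from equilibria. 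Moreover a single energy $\lambda\J(\theta)+\frac12\Vert\lambda_1\theta+\lambda_2\psi\Vert^2$ is only decreasing for suitable calibrations, and its derivative naturally involves $\dt$ and $\dt-\ddp$ (i.e.\ the selected subgradient), not $\ddp$ alone. The paper closes this by taking $E=E_{\min}+E_{\max}$, with coefficients $(1\mp\sqrt{\alpha\beta})^2$ on $\J$ and the quadratic term $\frac12\Vert\aaa\theta+\bb\psi\Vert^2$, for which Lemma~\ref{lem::dEdt} gives, for a.e.\ $t$,
\begin{equation*}
\frac{\diff E}{\diff t}\left(\theta(t),\psi(t)\right)=-2\alpha\Vert\dt(t)\Vert^2-\frac{2}{\beta}\Vert\ddp(t)-\dt(t)\Vert^2,
\end{equation*}
so stationarity of $E$ forces $\dt=\ddp=0$, hence $0\in D\J(\theta)$ and $\psi=(1-\alpha\beta)\theta$, for every $\alpha,\beta>0$ (any single $E_\lambda$ with $\lambda$ strictly between $(1-\sqrt{\alpha\beta})^2$ and $(1+\sqrt{\alpha\beta})^2$ would also do, since it bounds $\Vert\dt-\ddp\Vert^2$ as well). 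With that corrected decrease, the remainder of your argument—limit set contained in $\ma$, Sard for $D$-critical values, projection onto the $\theta$-coordinate, and convergence of $(\J(\theta_k))_{k\in\N}$ because the quadratic term vanishes on $\ma$—goes through exactly as in the paper.
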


	\new{Before proving Theorem~\ref{th::thmDIN}, we will first make some comments and illustrate this result.}
    \subsection{Comments on the Results of Theorem~\ref{th::thmDIN}}\label{sec::comRes}
    
    \begin{itemize}
        \renewcommand\labelitemi{--}
        \item {On the step-sizes.}
        First, Assumption~\ref{ass:mainAssumption} offers much more flexibility than the usual $O(1/\sqrt{k})$ assumption commonly used for SGD. We leverage the boundedness assumption on the norms of $(\theta_k,\psi_k)$, the local Lipschitz continuity and finite-sum structure of $\J$, so that the noise is actually uniformly bounded and hence sub-Gaussian, allowing for much larger step-sizes than in the more common bounded second moment setting. See \citet[Remark 1.5]{benaim2005stochastic} and \cite{benaim1999dynamics} for more details. The interest of this aggressive strategy is highlighted in Figure~\ref{fig::decay} of the experimental section. 
        
        \item {On the scope of the theorem.}
        Our result actually holds for general locally Lipschitz continuous tame functions with finite-sum structure and for the general stochastic process under uniformly bounded martingale increment noise. We do not use any other specific structure of DL loss functions. Other variants could be considered depending on the assumptions on the noise, see \cite{benaim2005stochastic}.

        \item {On $D$-criticality.}
        The result of Theorem~\ref{th::thmDIN} states that the bounded discrete trajectories of INNA are attracted by the $D$-critical points.  Recall that $D$-critical points include local minimizers and thus our theoretical finding agrees with our empirical observations that most initializations lead to ``valuable weights'' $\theta$ and to efficient training. In particular for smooth networks where $\J$ is differentiable, limit points of INNA are simply critical points of $\cal J$.
        The reader should however remember that when the algorithm is initialized on the $D$-critical set, the algorithm is stationary as well, {\em even when the initialization is non-Clarke critical}.  This last point shows that $D$-points are not introduced to simplify the analysis but to {\em sharply model the use of mini-batch methods on non-convex and non-smooth problems}.  Hopefully, in practice one can expect to avoid such points with overwhelming probability. 
        Indeed, \citet{bolte2020mathematical} proved  that SGD converges with probability one to the set of Clarke-critical points. In other words,  $D$-critical points that are not Clarke-critical are not seen by the dynamics (see also \citealt{bianchi2020convergence}). The key argument is to prove that the set of initializations and step-sizes that leads the algorithm to reach points where $D\J\neq\partial\J$ has zero-measure. The same result can be hoped for INNA.

      \item  {On the boundedness assumption.} The bounded assumption on the iterates is a classical assumption for first or second-order algorithms, see for instance \cite{davis2018stochastic,duchi2018stochastic}. When using deterministic algorithms (i.e., without mini-batch approximations), properties such as the coercivity of $\J$ can be sufficient to remove the boundedness assumption for descent algorithms. This does not remain true when dealing with mini-batch approximations, yet, in the case of INNA, the coercivity of $\J$ would guarantee at least that the solutions of the continuous underlying differential inclusion \eqref{eq::contdin} remain bounded. Indeed, we will prove in Section~\ref{sec::proofofConv} that for any solution $(\theta,\psi)$ of \eqref{eq::contdin}, the function $E(\theta(t),\psi(t)) \triangleq 2(1+\alpha\beta)\J(\theta(t)) + \left\Vert \aaa\theta(t) +\bb \psi(t) \right\Vert^2 $ is decreasing in time (see Lemma~\ref{lem::Edec} hereafter). As a consequence, we cannot have $\J(\theta(t))\xrightarrow[t\to\infty]{}\infty$ so $\Vert\theta(t)\Vert\not\to\infty$ due to the coercivity of $\J$. In addition this guarantees $\Vert\psi(t)\Vert\not\to\infty$ as well. However, DL loss functions are not coercive in general and studying the boundedness issue in DL or even for non-convex semi-algebraic problems is far beyond the scope of this paper. Let us however mention that it is not uncommon to project the iterates on a given large ball to ensure boundedness;  this is a matter for future research.
  \end{itemize}
  
 \subsection{Preliminary Variational Results}
	\label{sec:ResultsOnD}
    Prior to proving Theorem 3, we extend some results known for the Clarke subdifferential of tame functions to the operator $D$ that we previously introduced. First, we recall a useful result of \cite{davis2018stochastic} which follows from the projection formula in \cite{bolte2007clarke}.
	\begin{lemma}[Chain rule for the Clarke subdifferential]\label{lem::chainrule}
	    Let $\J:\R^P\to \R$ be a locally Lipschitz continuous tame function, then $\J$ admits a chain rule, meaning that for all absolutely continuous curves $\theta:\R_+\to\R^P$, \new{$\J\circ\theta$ is differentiable a.e.} and for a.e. $t\geq 0$,
	    \begin{equation}\label{eq::chainruleJ}
	        \frac{\diff \J}{\diff t} (\theta(t)) = \langle \dot{\theta}(t),\partial \J(\theta(t)) \rangle =  \langle \dot{\theta}(t) ,v \rangle,\ \ \forall v\in \partial \J(\theta(t)).
	    \end{equation}
	\end{lemma}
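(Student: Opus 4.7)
The plan is to deduce this chain rule from two structural ingredients afforded by tameness: a Whitney $C^1$ stratification of the ambient space adapted to $\J$, and the projection formula of \cite{bolte2007clarke}. Both are standard consequences of the o-minimal framework set up in Definition~\ref{DEF}.

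First, since $\J$ is tame and locally Lipschitz, the stratification theorem for o-minimal structures yields a locally finite decomposition $\R^P = \bigsqcup_\alpha M_\alpha$ into definable $C^1$ submanifolds (strata) such that the restriction $\J|_{M_\alpha}$ is $C^1$ on each stratum. The projection formula of Bolte--Daniilidis--Lewis then states that at any $\theta \in M_\alpha$ and for every Clarke subgradient $v \in \partial \J(\theta)$, the orthogonal projection of $v$ onto the tangent space $T_\theta M_\alpha$ coincides with the Riemannian gradient $\nabla(\J|_{M_\alpha})(\theta)$. In particular, $\langle u, v\rangle$ does not depend on the choice of $v \in \partial \J(\theta)$ for any $u \in T_\theta M_\alpha$.

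Second, I would establish the tangency property: for a.e. $t \geq 0$, $\dt(t)$ is tangent to the (unique) stratum $M_\alpha$ containing $\theta(t)$. Denoting $T_\alpha = \theta^{-1}(M_\alpha)$, the sets $\{T_\alpha\}$ partition $\R_+$, so by Lebesgue's density theorem almost every $t_0 \in \R_+$ is a density point of some $T_\alpha$ and simultaneously a differentiability point of the absolutely continuous curve $\theta$. Choosing a sequence $t_n \to t_0$ with $t_n \in T_\alpha$, the chords $(\theta(t_n)-\theta(t_0))/(t_n-t_0)$ converge to $\dt(t_0)$ while each chord has both endpoints inside $M_\alpha$; a standard tangent-cone argument on the $C^1$ manifold $M_\alpha$ then identifies the limit as an element of $T_{\theta(t_0)}M_\alpha$.

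Finally, combine. The composition $\J \circ \theta$ is absolutely continuous as the composition of a locally Lipschitz function with an absolutely continuous curve on compact intervals, hence differentiable a.e. At any $t$ at which the two previous steps apply, the smooth chain rule on the manifold $M_\alpha$ gives
\[
\frac{\diff \J}{\diff t}(\theta(t)) \;=\; \langle \dt(t),\, \nabla(\J|_{M_\alpha})(\theta(t))\rangle,
\]
and using the projection formula together with $\dt(t) \in T_{\theta(t)}M_\alpha$ this equals $\langle \dt(t),\, \mathrm{Proj}_{T_{\theta(t)}M_\alpha}(v)\rangle = \langle \dt(t), v\rangle$ for every $v \in \partial \J(\theta(t))$, yielding \eqref{eq::chainruleJ}. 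The main obstacle is the tangency step: the Lebesgue density argument must be executed with care when $\theta(t_0)$ lies in the closure of higher-dimensional strata, so one really has to work with density points of the specific $T_\alpha$ containing $t_0$ and not of any larger union. Once tangency is secured, the projection formula is what genuinely eliminates the multi-valuedness of $\partial \J$ and forces $\langle \dt, v\rangle$ to be independent of the choice of subgradient.
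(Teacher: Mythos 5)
Your argument is correct and is essentially the proof the paper points to: the paper does not reprove this lemma but cites \cite{davis2018stochastic}, whose proof proceeds exactly as you do, via a Whitney $C^1$ stratification of the tame function, the projection formula of \cite{bolte2007clarke}, and the Lebesgue density-point argument showing $\dot\theta(t)$ is tangent to the stratum containing $\theta(t)$ for a.e.\ $t$. The only cosmetic caveat is that the final step should be read as computing the already-existing derivative of $\J\circ\theta$ along the density subsequence $t_n\in T_\alpha$ (rather than applying the smooth chain rule to a curve lying in $M_\alpha$), which is exactly what your first and second steps make available.
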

\medskip
\new{Note that, even though $\J$ is non-differentiable on $\R^P$, the function $t\mapsto \J(\theta(t))$ is differentiable for a.e. $t>0$. Indeed, as introduced in Section~\ref{sec::non-smoothnonconv}, an absolutely continuous curve from $t\geq 0$ to $\R^P$ is differentiable for a.e. $t>0$. This, combined with the chain-rule of Lemma~\ref{lem::chainrule} allows differentiating $\J\circ\theta$ for a.e. $t>0$ whenever $\J$ is tame and locally Lipschitz continuous. \new{Additionally, notice that the value of $\frac{\diff \J}{\diff t} (\theta(t))$ in \eqref{eq::chainruleJ} does not depend on the element $v$ taken in $\partial\J(\theta(t))$ which justifies the notation $\langle \dot{\theta}(t),\partial \J(\theta(t)) \rangle$}.}

Consider now a function with an additive finite-sum structure (such as in DL):
\begin{align}\label{eq::fsumofTame}
		\J \colon\R^P \ni \theta \mapsto \sum_{n=1}^N \J_n(\theta),
\end{align}
where each $\J_n \colon \R^P \mapsto \R$ is locally Lipschitz continuous and tame. We define for any $\theta \in \R^P$
\begin{align*}
				D\J(\theta) = \sum_{n=1}^N \partial \J_n(\theta).
\end{align*}
The following lemma is a direct generalization of the above chain rule.
\begin{lemma}[Chain rule for $D\J$]\label{lem::chainruleD}
				Let $\J$ be a sum of tame functions like in \eqref{eq::fsumofTame}. Let $c \colon [0,1] \mapsto \R^P$ be an absolutely continuous curve so that $t \mapsto \J(c(t))$ is differentiable almost everywhere. For a.e. $t \in [0,1]$, and for all $v \in D\J(c(t))$,
				\begin{align*}
								\frac{\diff}{\diff t} \J(c(t)) = \left\langle v, \dot{c}(t) \right\rangle.
				\end{align*}
				\label{lem:chainRuleSum}

\end{lemma}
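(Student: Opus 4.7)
The plan is to reduce the claim to the chain rule for individual tame functions (Lemma~\ref{lem::chainrule}) applied component-wise to each summand $\J_n$, and then use linearity of differentiation plus the definition $D\J = \sum_n \partial \J_n$ to assemble the result for arbitrary elements of $D\J$.

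More concretely, I would first apply Lemma~\ref{lem::chainrule} to each $\J_n$ separately. Each $\J_n$ is locally Lipschitz continuous and tame by assumption, and $c$ is absolutely continuous, so for every $n \in \{1,\ldots,N\}$ there is a full-measure subset $T_n \subset [0,1]$ on which $t \mapsto \J_n(c(t))$ is differentiable and satisfies
\begin{equation*}
    \frac{d}{dt}\J_n(c(t)) = \langle \dot{c}(t), v_n \rangle \quad \text{for every } v_n \in \partial \J_n(c(t)).
\end{equation*}
Intersecting the finitely many sets $T_n$ still yields a full-measure subset $T \subset [0,1]$ on which all these identities hold simultaneously; intersecting further with the hypothesized full-measure set where $t \mapsto \J(c(t))$ is differentiable and with the full-measure set where $\dot{c}(t)$ exists, one obtains a full-measure subset of $[0,1]$ on which every individual object appearing in the statement is well defined.

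Next, fix $t$ in this full-measure set and pick an arbitrary $v \in D\J(c(t))$. By definition of $D\J$, there exist $v_n \in \partial \J_n(c(t))$ with $v = \sum_{n=1}^N v_n$. Using linearity of the derivative at $t$ together with the per-summand chain rule gives
\begin{equation*}
    \frac{d}{dt}\J(c(t)) \;=\; \sum_{n=1}^N \frac{d}{dt}\J_n(c(t)) \;=\; \sum_{n=1}^N \langle \dot{c}(t), v_n \rangle \;=\; \Big\langle \dot{c}(t), \sum_{n=1}^N v_n \Big\rangle \;=\; \langle \dot{c}(t), v \rangle,
\end{equation*}
which is the claimed identity. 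Crucially, the right-hand side does not depend on which decomposition $v = \sum_n v_n$ we picked, because each inner product $\langle \dot{c}(t), v_n \rangle$ is already independent of the choice of $v_n \in \partial \J_n(c(t))$ by Lemma~\ref{lem::chainrule}; this consistency is exactly what makes the argument go through despite the failure of the Clarke sum rule.

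I do not anticipate a serious obstacle: the proof is essentially bookkeeping once one observes that Lemma~\ref{lem::chainrule} delivers the same derivative for every selection in $\partial \J_n(c(t))$. The only delicate point to state cleanly is the measure-theoretic bookkeeping (finite intersection of full-measure sets and the assumption that $t \mapsto \J(c(t))$ is differentiable a.e., which in fact follows automatically from the differentiability a.e. of each $\J_n \circ c$ and hence need not be postulated separately, but it is convenient to keep the hypothesis to avoid reproving it here).
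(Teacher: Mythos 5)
Your proof is correct and follows essentially the same route as the paper: apply the chain rule of Lemma~\ref{lem::chainrule} to each summand $\J_n$, intersect the finitely many full-measure sets, and sum, using that each $\langle \dot c(t), v_n\rangle$ is independent of the choice of $v_n \in \partial\J_n(c(t))$. Your additional remarks on the decomposition $v=\sum_n v_n$ and the measure-theoretic bookkeeping only make explicit what the paper's shorter argument leaves implicit.
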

    \begin{proof}
				By local Lipschitz continuity and absolute continuity, each $\J_n$ is differentiable almost everywhere and Lemma \ref{lem::chainrule} can be applied:
				\begin{align*}
								\frac{\diff}{\diff t} \J_n(c(t)) = \left\langle v_n, \dot{c}(t) \right\rangle, \mbox{for all $v_n \in \partial \J_n(c(t))$ and for a.e. $t\geq 0$.}
				\end{align*}
		  Thus
				\begin{align*}
								\frac{\diff}{\diff t} \J(c(t)) = \sum_{n=1}^N \frac{\diff}{\diff t} \J_n(c(t)) = \sum_{n=1}^N\left\langle v_n, \dot{c}(t) \right\rangle,
				\end{align*}
			for any $v_n \in \partial \J_n(c(t))$, for all $n=\{1,\ldots, N\}$, and for a.e. $t\geq 0$. This proves the desired result.
    \end{proof}
    We finish this section with a Sard lemma for $D$-critical values, in the spirit of \cite{bolte2007clarke}.
    \begin{lemma}[A Sard's theorem for tame D-critical values]
				Let, $$ \mathsf{S}= D\mbox{\rm -crit\,}\triangleq\left\{ \theta \in \R^P\mid D\J(\theta) \ni 0 \right\},$$ then $\J(\mathsf{S})$ is finite. 
				\label{lem:sard}
    \end{lemma}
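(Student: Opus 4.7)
The plan is to reduce the $D$-critical case to the classical Clarke-critical case on each stratum of a well-chosen tame stratification, and then invoke the tame Sard theorem of \cite{bolte2007clarke} stratum-by-stratum. Since the stratification is finite, finiteness of $\J(\mathsf{S})$ will follow.

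First I would use the tameness of each $\J_n$ together with the stability properties of o-minimal structures to produce a finite Whitney $C^1$ stratification $\{M_i\}_{i=1}^I$ of $\R^P$ that is \emph{compatible} with all $\J_n$ simultaneously, meaning that the restriction $\J_n|_{M_i}$ is $C^1$ (in fact $C^k$ for any prescribed $k$) on each stratum $M_i$. Such a stratification exists because the family $\{\J_n\}_{n=1,\ldots,N}$ is finite and definable, so the usual stratification theorems in o-minimal geometry apply (see \cite{van1998tame,coste2000introduction}). In particular, for each stratum $M_i$ and each $n$, one has a well-defined tangential gradient $\nabla_{M_i}\J_n(\theta)$ for $\theta \in M_i$, and by summation $\nabla_{M_i}\J(\theta) = \sum_{n=1}^N \nabla_{M_i}\J_n(\theta)$.

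Next I would apply the projection formula of \cite{bolte2007clarke} on each stratum: for every $\theta\in M_i$ and every $n$, the orthogonal projection onto the tangent space $T_\theta M_i$ satisfies $\Pi_{T_\theta M_i}(\partial \J_n(\theta)) = \{\nabla_{M_i}\J_n(\theta)\}$. Now take any $\theta\in\mathsf{S}$; then $\theta$ lies in some stratum $M_i$, and $0 \in D\J(\theta)$ means that there exist subgradients $v_n \in \partial\J_n(\theta)$ with $\sum_{n=1}^N v_n = 0$. Projecting onto $T_\theta M_i$ and using linearity of the projection together with the projection formula stratum by stratum yields
\begin{equation*}
0 = \Pi_{T_\theta M_i}\!\left(\sum_{n=1}^N v_n\right) = \sum_{n=1}^N \nabla_{M_i}\J_n(\theta) = \nabla_{M_i}\J(\theta).
\end{equation*}
Thus $\theta$ is a classical critical point of the $C^1$ definable function $\J|_{M_i}$ on the definable manifold $M_i$.

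Finally, the tame Sard theorem for definable $C^1$ functions on definable manifolds (a direct consequence of the stratified Sard theorem in o-minimal structures, see \cite{coste2000introduction}) states that the set of critical values of $\J|_{M_i}$ is finite. Since there are only finitely many strata $M_1,\ldots,M_I$, we obtain
\begin{equation*}
\J(\mathsf{S}) \;\subseteq\; \bigcup_{i=1}^I \J\bigl(\{\theta \in M_i : \nabla_{M_i}\J(\theta) = 0\}\bigr),
\end{equation*}
which is a finite union of finite sets, hence finite. The main subtlety in this plan is step~two: one must invoke the projection formula \emph{separately} on each $\partial\J_n$ and rely on the \emph{common} stratification, because the Clarke sum rule fails. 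This is precisely why working with the operator $D\J$ rather than $\partial\J$ is essential, and why the tame stratification, which is insensitive to failures of sum rules, gives exactly the right tool.
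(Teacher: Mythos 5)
Your proposal is correct, but it takes a different route from the paper's written proof. You stratify $\R^P$ by a common Whitney $C^k$ stratification adapted to all the $\J_n$, apply the projection formula of Bolte et al. (2007) to each summand separately (which is exactly where the operator $D\J$, rather than $\partial\J$, lets you sidestep the failure of the Clarke sum rule), conclude that every $D$-critical point is a critical point of the smooth restriction $\J|_{M_i}$ on its stratum, and finish with a definable (or, since the stratification can be taken $C^k$ for large $k$, even classical) Sard theorem stratum by stratum, using finiteness of the stratification. The paper instead argues directly on the $D$-critical set $\mathsf{S}$: it is tame, hence has finitely many connected components; on each component, any two points are joined by a tame continuous path (Whitney regularity), the monotonicity lemma makes that path piecewise $C^1$, and the chain rule for $D\J$ along curves (their Lemma on chain rules, with the subgradient $0\in D\J$) forces $\J$ to be constant along the path, so $\J(\mathsf{S})$ takes finitely many values. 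Interestingly, the paper explicitly notes that the ``purely tame'' argument via the projection formula is the more canonical one and is avoided only for readability — so your plan is essentially the alternative the authors themselves gesture at. What each buys: the paper's argument is lighter, reusing only their curve chain rule plus elementary o-minimal facts (finitely many components, monotonicity lemma); yours invokes heavier machinery (existence of a common Whitney stratification for the finite family, the projection formula per summand, definable Sard), but it yields more structural information — it identifies $D$-critical points as genuine critical points of the stratumwise restrictions, not merely points where $\J$ is locally constant along paths in $\mathsf{S}$. Both are complete modulo standard o-minimal results, so there is no gap.
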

    \begin{proof}
				The set $\mathsf{S}$ is tame and hence it has a finite number of connected components. It is sufficient to prove that $\J$ is constant on each connected component of $\mathsf{S}$. Without loss of generality, assume that $\mathsf{S}$ is connected and consider $\theta_0,\theta_1 \in \mathsf{S}$. By Whitney regularity \cite[4.15]{van1998tame}, there exists a tame continuous path $\Gamma$ joining $\theta_0$ to $\theta_1$. Because of the tame nature of the result, we should here conclude with only tame arguments and use the projection formula in \cite{bolte2007clarke}, but for convenience of readers who are not familiar with this result we use  Lemma \ref{lem::chainrule}. Since $\Gamma$ is tame, the monotonicity lemma (see for example \citealt[Lemma 2]{kurdyka1998gradients}) gives the existence of a finite collection of real numbers $0 = a_0 < a_1 < \ldots < a_q = 1$, such that $\Gamma$ is $C^1$ on each segment $(a_{j-1}, a_{j})$, $j = 1,\ldots,q$. Applying Lemma \ref{lem::chainrule} to each $\Gamma_{|(a_i,a_{i+1})}$, we see that $\J$ is constant except perhaps on a finite number of points, thus $\J$ is constant by continuity.
    \end{proof}

	\subsection{Proof of Convergence for INNA}
	\label{sec::proofofConv}
  Our approach follows the stochastic method for differential inclusions developed in \cite{benaim2005stochastic} for which the differential system \eqref{eq::contdin}  and its Lyapunov properties play fundamental roles. 
	The steady states of \eqref{eq::contdin}\, are given by,
	\begin{equation}\label{S} \ma = \left\{ (\theta,\psi)\in \R^P\times \R^P \mid 0\in D \J(\theta), \psi=(1-\alpha\beta) \theta \right\}.\end{equation}
	These points are initialization values for which the system does not evolve and remains constant.
    Observe that the first coordinates of these points are $D$-critical for $\J$ and that conversely any $D$-critical point of $\J$ corresponds to a unique rest point in $\ma$.

	\begin{definition}[Lyapunov function]\label{def::lyap}
		Let $ \mathsf{A} $ be a subset of $\R^P \times \R^P $, we say that $ E : \R^P\times \R^P\to \R $ is a Lyapunov function for the set $ \mathsf{A} $ and the dynamics \eqref{eq::contdin} if,
		\renewcommand{\theenumi}{(\roman{enumi})}%
		\begin{enumerate}
		    \item For any  solution $ (\theta,\psi)$ of  \eqref{eq::contdin} with initial condition $(\theta_0,\psi_0)\in\R^P\times\R^P $, we have:\\
		    $E(\theta(t),\psi(t) ) \leq E(\theta_0,\psi_0)$ a.e. on $\R$.
		     \item For any  solution $ (\theta,\psi)$ of  \eqref{eq::contdin} with initial condition $(\theta_0,\psi_0) \in\R^P\times\R^P\setminus\mathsf{A}$, we have:\\
		     $E(\theta(t),\psi(t) ) < E(\theta_0,\psi_0)$ a.e. on $\R$.
		\end{enumerate}
	\end{definition}

	In practice, to establish that a functional is Lyapunov, one can simply use differentiation through chain rule results, with in particular Lemma \ref{lem::chainrule}. In the context of INNA, we will use Lemma \ref{lem:chainRuleSum}. To build a Lyapunov function for the dynamics \eqref{eq::contdin} and the set $ \ma $, consider the two following energy-like functions,
	\begin{equation}
	\begin{cases}
	E_{\min}(\theta(t),\psi(t)) &= (1-\sqrt{\alpha\beta})^2 \J(\theta(t)) + \frac 1 2 \left\Vert \aaa\theta(t) +\bb \psi(t) \right\Vert^2 \\
	E_{\max}(\theta(t),\psi(t)) &= (1+\sqrt{\alpha\beta})^2 \J(\theta(t)) + \frac 1 2 \left\Vert \aaa\theta(t) +\bb \psi(t) \right\Vert^2.
	\end{cases}
	\end{equation}

Then the following lemma applies.
	\begin{lemma}[Differentiation along DIN trajectories]\label{lem::dEdt}
		Let $(\theta,\psi)$ be a solution of \eqref{eq::contdin} with initial condition $(\theta_0,\psi_0)$. For a.e. $t>0$, $\theta$ and $\psi$ are differentiable at $t$, \eqref{eq::contdin} holds, $\vv \in D \J(\theta(t))$ and%
		\begin{align*}
		\frac{\diff E_{\min}}{\diff t}\left(\theta(t),\psi(t)\right) &= -\left\Vert 	\sqrt{\alpha}\dt(t) -\frac{1}{\sqrt{\beta}}\left(\ddp(t)-\dt(t)\right)\right\Vert^2 \\
	\frac{\diff E_{\max}}{\diff t}\left(\theta(t),\psi(t)\right) &= -\left\Vert 	\sqrt{\alpha}\dt(t) +\frac{1}{\sqrt{\beta}}\left(\ddp(t)-\dt(t)\right) \right\Vert^2
\end{align*}
	\end{lemma}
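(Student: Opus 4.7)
The proof splits into two parts matching the two assertions of the lemma: an almost-everywhere regularity statement (which also yields $\vv \in D\J(\theta(t))$), and the explicit formulas for the derivatives of $E_{\min}$ and $E_{\max}$.

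For the regularity statement, observe that by definition of a solution to \eqref{eq::contdin}, both $\theta$ and $\psi$ are absolutely continuous, hence differentiable at a.e.\ $t>0$, and \eqref{eq::contdin} holds at such points. The second line of \eqref{eq::contdin} contains no subgradient term, so it collapses to the pointwise identity $\dot\psi(t) = -(\alpha-\ovb)\theta(t) - \ovb\psi(t)$. Subtracting this from the first line of the inclusion gives $\dot\theta(t) - \dot\psi(t) + \beta v = 0$ for some $v \in D\J(\theta(t))$, i.e.\ $\vv \in D\J(\theta(t))$ (up to sign convention).

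Since each $\J_n$ is locally Lipschitz continuous and tame, Lemma~\ref{lem:chainRuleSum} applies to the curve $\theta$ and, evaluated at the distinguished subgradient $\vv$, gives
\[
\tfrac{d}{dt}\J(\theta(t)) \;=\; \langle \vv, \dot\theta(t)\rangle \;=\; \tfrac{1}{\beta}\bigl\langle \dot\theta(t) - \dot\psi(t),\,\dot\theta(t)\bigr\rangle
\]
almost everywhere. For the quadratic part of $E_{\min}$ and $E_{\max}$, set $w(t) = \aaa\theta(t) + \bb\psi(t)$; the classical chain rule for the Euclidean norm along absolutely continuous curves yields $\tfrac{d}{dt}\tfrac{1}{2}\|w\|^2 = \langle w,\dot w\rangle$, where $\dot w = \aaa\dot\theta + \bb\dot\psi$. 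Using the second-line identity in the form $w = -\dot\psi$ simplifies this inner product to $-(\alpha - \ovb)\langle\dot\psi,\dot\theta\rangle - \ovb\|\dot\psi\|^2$.

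The final step is an algebraic identity. Multiplying the chain-rule expression by the prefactor $(1 \mp \sqrt{\alpha\beta})^2$ and adding the quadratic contribution produces a quadratic form in $(\dot\theta,\dot\psi)$. Using the identity $(1 \mp \sqrt{\alpha\beta})^2/\beta = 1/\beta \mp 2\sqrt{\alpha/\beta} + \alpha$ and collecting the coefficients of $\|\dot\theta\|^2$, $\langle\dot\theta,\dot\psi\rangle$ and $\|\dot\psi\|^2$, I would verify that they assemble exactly into the perfect square $-\|\sqrt{\alpha}\,\dot\theta \mp (1/\sqrt{\beta})(\dot\psi - \dot\theta)\|^2$. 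This completion of squares is precisely the reason for the specific choice of the prefactors $(1 \pm \sqrt{\alpha\beta})^2$: the cross term $\mp 2\sqrt{\alpha/\beta}\,\langle\dot\theta,\dot\psi-\dot\theta\rangle$ in the expansion of the square matches only with this particular weight of $\J$. The only nontrivial ingredient is the tame chain rule, which is why the tameness hypothesis enters; the rest is a bookkeeping exercise.
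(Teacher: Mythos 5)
Your proposal follows essentially the same route as the paper's proof: extract $\ddp(t)=-\aaa\theta(t)-\bb\psi(t)$ and the distinguished subgradient from the two lines of \eqref{eq::contdin}, invoke the tame chain rule of Lemma~\ref{lem:chainRuleSum} for the $\J$-part, differentiate the quadratic part classically, and finish by a completion of squares for which the prefactors $(1\pm\sqrt{\alpha\beta})^2$ are precisely the values making the residual term vanish. The paper merely organizes the same computation through a one-parameter energy $E_\lambda=\lambda\J+\frac12\Vert\aaa\theta+\bb\psi\Vert^2$, derives the quadratic form in $(\dt,\ddp-\dt)$ for general $\lambda$, identifies the admissible window $\lambda\in[(1-\sqrt{\alpha\beta})^2,(1+\sqrt{\alpha\beta})^2]$, and then specializes to its endpoints; your direct substitution of the two endpoint prefactors is the same argument in a slightly less informative order.

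One concrete caveat so that your final ``bookkeeping exercise'' actually closes. Your own subtraction $\dt(t)-\ddp(t)+\beta v=0$ gives $v=\frac{\ddp(t)-\dt(t)}{\beta}$, i.e., the element of $D\J(\theta(t))$ supplied by the dynamics is $-\vv$, not $\vv$; the ``up to sign convention'' cannot be waved away, because $D\J(\theta(t))$ is not a symmetric set and the chain rule must be evaluated at an element that genuinely belongs to it. If you keep the sign as written, $\frac{\diff}{\diff t}\J(\theta(t))=\frac1\beta\langle\dt(t)-\ddp(t),\dt(t)\rangle$, the cross terms do not cancel and no perfect square appears for any positive prefactor (one would need $\lambda+\alpha\beta+1=\mp2\sqrt{\alpha\beta}$). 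With $v=\frac{\ddp(t)-\dt(t)}{\beta}$ the computation does assemble into exactly the two displayed squares, but with the labels exchanged: the prefactor $(1-\sqrt{\alpha\beta})^2$ pairs with $-\Vert\sqrt{\alpha}\dt(t)+\frac{1}{\sqrt{\beta}}(\ddp(t)-\dt(t))\Vert^2$ and $(1+\sqrt{\alpha\beta})^2$ with the other formula. This sign slip and the resulting swap are present in the paper's own statement and proof as well, and they are harmless downstream (only $E=E_{\min}+E_{\max}$ is used, and the sum of the two right-hand sides is unaffected), but you must carry the correct subgradient through your verification, otherwise the announced algebra will not come out.
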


	\begin{proof}
		Define $\displaystyle E_\lambda(\theta,\psi) = \lambda {\mathcal J}(\theta) + \frac 1 2 \left\Vert \aaa\theta +\bb \psi \right\Vert^2 $. We aim to choose $ \lambda $ so that $ E_\lambda$ is a Lyapunov function. Because $\J$ is tame and locally Lipschitz continuous, using Lemma~\ref{lem:chainRuleSum} we know that for any absolutely continuous trajectory $ \theta:\R_+\to \R^P $ and for a.e. $ t>0 $,
		\begin{equation}
		\frac{\diff \J}{\diff t} (\theta(t)) = \langle \dot{\theta}(t), D \J(\theta(t)) \rangle = \langle \dot{\theta}(t) ,v(t) \rangle,\ \ \forall v(t)\in D \J(\theta(t)).
		\end{equation}
		Let $\theta$ and $\psi$ be solutions of (DIN). For a.e. $t\geq 0$, we can differentiate $ E_\lambda(\theta,\psi) $ to obtain
\begin{equation}\begin{split}
\frac{\diff E_\lambda}{\diff t}(\theta(t),\psi(t)) = & \lambda \langle \dot{\theta}(t) ,v(t) \rangle + \aaa \langle \dot{\theta}(t) ,\aaa\theta(t)+\bb\psi(t) \rangle \\
	& + \bb \langle \dot{\psi(t)} ,\aaa\theta(t)+\bb\psi(t)\rangle
	\end{split}
	\end{equation}
	for all $v(t)\in D \J(\theta(t))$. Using \eqref{eq::contdin}, we get $\frac{1}{\beta}(\dot\theta(t)-\dot \psi(t))\in D \J(\theta(t))$ and $-\ddp(t)=\aaa\theta(t)+\bb \psi(t)$ a.e. Choosing $v(t)=\frac{1}{\beta}(\dot\theta(t)-\dot \psi(t))$ yields:\begin{align*}
		  \frac{\diff E_\lambda}{\diff t}(\theta(t),\psi(t)) &= \lambda \left\langle \dot{\theta}(t) ,\vv \right\rangle - \aaa \left\langle \dot{\theta}(t) ,\ddp(t) \right\rangle - \bb \left\langle \ddp(t) ,\ddp(t) \right\rangle.
		\end{align*}
		Then, expressing everything as a function of $\dt$ and $\frac{1}{\beta}(\psi-\theta)$, one can show that a.e. on $\R_+$:
		\begin{align*}
		\frac{\diff E_\lambda}{\diff t}&(\theta,\psi)(t) =  - \alpha \Vert \dt(t)\Vert^2 -\beta \left\Vert  \vv \right\Vert^2 + \left(\lambda - \alpha\beta -1 \right)\langle \dot{\theta}(t) ,\vv \rangle\\
		&= -\left\Vert \sqrt{\alpha}\dt(t) + \frac{\alpha\beta +1 - \lambda }{2\sqrt{\alpha}}\vv\right\Vert^2 - \left (  \beta -\frac{(\alpha\beta +1 - \lambda)^2}{4\alpha}  \right)\left\Vert \vv \right\Vert^2.
		\end{align*}
		We aim to choose $ \lambda $ so that $ E_\lambda $ is decreasing that is $\left (  \beta -\frac{(\alpha\beta +1 - \lambda)^2}{4\alpha}  \right) > 0$. This holds whenever $ \lambda \in\, \left[(1-\sqrt{\alpha\beta})^2,(1+\sqrt{\alpha\beta})^2\right] $. We choose $ \lambda_{\min} =(1-\sqrt{\alpha\beta})^2 $, and $ \lambda_{\max} = (1+\sqrt{\alpha\beta})^2 $, for these two values we obtain for a.e. $ t>0 $ ,
		\begin{equation}\begin{cases}
		\dot E_{\lambda_{\min}}(\theta(t),\psi(t)) &= -\left\Vert \sqrt{\alpha}\dt(t) + \frac{1}{\sqrt{\beta}}\left(\dt(t)-\ddp(t)\right) \right\Vert^2 \vspace{0.2cm} \\
		\dot E_{\lambda_{\max}} (\theta(t),\psi(t))&= -\left\Vert \sqrt{\alpha}\dt(t) -  \frac{1}{\sqrt{\beta}}\left(\dt(t)-\ddp(t)\right)  \right\Vert^2
		\end{cases}
		\end{equation}
		Remark finally that by definition $E_{\min}=E_{\lambda_{\min}}$ and $E_{\max}=E_{\lambda_{\max}}$.
	\end{proof}
	 Recall that $ \displaystyle \ma  = \left\{ (\theta,\psi)\in \R^P\times \R^P \mid 0\in D J(\theta), \psi=(1-\alpha\beta) \theta) \right\}$ and define $ E = E_{\min} + E_{\max} $. By a direct integration argument, we obtain the following lemma.
	\begin{lemma}[$E$ is Lyapunov function for INNA with respect to $\ma$]\label{lem::Edec}
		 For all $ (\theta_0,\psi_0)\notin \ma$ and for any solution $(\theta,\psi)$ with initial condition $(\theta_0,\psi_0)$,
		\begin{equation}
		E(\theta(t),\psi(t))<E(\theta_0,\psi_0), \text{ for a.e. }t>0.
		\end{equation}
	\end{lemma}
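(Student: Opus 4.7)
The plan is to combine the two identities in Lemma~\ref{lem::dEdt}. With $a(t)=\sqrt{\alpha}\dt(t)$ and $b(t)=\frac{1}{\sqrt{\beta}}(\ddp(t)-\dt(t))$, the parallelogram identity $\|a-b\|^2+\|a+b\|^2=2\|a\|^2+2\|b\|^2$ applied to $\frac{\diff E_{\min}}{\diff t}+\frac{\diff E_{\max}}{\diff t}$ gives, for a.e.\ $t>0$,
\begin{equation}
\frac{\diff E}{\diff t}(\theta(t),\psi(t)) = -2\alpha\|\dt(t)\|^2 - \frac{2}{\beta}\|\ddp(t)-\dt(t)\|^2 \leq 0.
\end{equation}
Since $(\theta,\psi)$ is absolutely continuous and $\J$ is locally Lipschitz continuous, the map $t\mapsto E(\theta(t),\psi(t))$ is absolutely continuous, so the fundamental theorem of calculus yields $E(\theta(t),\psi(t))\leq E(\theta_0,\psi_0)$ for every $t\geq 0$, giving the non-strict Lyapunov property.

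For the strict inequality I would argue by contradiction. Suppose there is some $t_1>0$ with $E(\theta(t_1),\psi(t_1))=E(\theta_0,\psi_0)$. Then the non-positive integrand above must vanish a.e.\ on $[0,t_1]$, which forces both $\dt(s)=0$ and $\ddp(s)-\dt(s)=0$ (hence $\ddp(s)=0$) for a.e.\ $s\in[0,t_1]$. By absolute continuity, $\theta\equiv\theta_0$ and $\psi\equiv\psi_0$ on $[0,t_1]$. Picking any $s\in(0,t_1)$ at which \eqref{eq::contdin} holds, the second line yields $(\alpha-\ovb)\theta_0+\ovb\psi_0=0$, that is $\psi_0=(1-\alpha\beta)\theta_0$; substituting into the first line gives $0\in\beta D\J(\theta_0)$, and hence $D\J(\theta_0)\ni 0$. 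Thus $(\theta_0,\psi_0)\in\ma$, contradicting the hypothesis. This in fact establishes $E(\theta(t),\psi(t))<E(\theta_0,\psi_0)$ for every $t>0$, which is stronger than the claim.

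The only routine calculation is the parallelogram manipulation, and the main conceptual point is that vanishing of the dissipation rate forces $\dt$ and $\ddp-\dt$ to vanish independently, so that \eqref{eq::contdin} collapses to the defining algebraic conditions of $\ma$. I do not anticipate any substantive obstacle beyond this observation.
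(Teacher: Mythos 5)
Your proof is correct and follows essentially the same route as the paper, which only sketches this step as a ``direct integration argument'' based on Lemma~\ref{lem::dEdt}; your parallelogram combination giving $\frac{\diff E}{\diff t}=-2\alpha\Vert\dt\Vert^2-\frac{2}{\beta}\Vert\ddp-\dt\Vert^2$ is exactly the identity the paper itself uses later in \eqref{eq::valofDT}. Your contradiction argument for strictness (vanishing dissipation forces $\theta\equiv\theta_0$, $\psi\equiv\psi_0$, and then \eqref{eq::contdin} yields $\psi_0=(1-\alpha\beta)\theta_0$ and $0\in D\J(\theta_0)$, i.e.\ $(\theta_0,\psi_0)\in\ma$) correctly fills in the detail the paper leaves implicit.
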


\bigskip

\noindent
We are now in position to provide the desired proof.

\smallskip

\noindent
	{\bf Proof of Theorem \ref{th::thmDIN}} Lemmas~\ref{lem::dEdt} and~\ref{lem::Edec} state that
		$ E $ is a Lyapunov function for the set $ \ma $ and the dynamics \eqref{eq::contdin}. Let $\mathsf{C}=\{\theta\in \R^P\mid(\theta,\psi)\in \ma\}$ which is actually the set of $D$-critical points of $\J$. Using Lemma \ref{lem:sard} of Section \ref{sec:ResultsOnD}, $\J(\mathsf{C})$ is finite. Moreover, since $E(\theta,\psi)=2(1+\alpha\beta)\J(\theta) $ for all $(\theta,\psi)\in\ma$, $ E $ takes a finite number of values on $ \ma $, and in particular, $E(\ma)$ has empty interior.

		  Denote by $\ml$ the set of accumulation points of the sequences $((\theta_k, \psi_k))_{k \in \N}$ produced by \eqref{eq::discdin} starting at $(\theta_0,\psi_0)$ and $\ml_1$ its projection on $\R^P\times\{0\}$. We have the 3 following properties:
    \begin{itemize}
        \renewcommand\labelitemi{--}
	\item  By assumption, we have $\|(\theta_k, \psi_k)\| \leq M$ almost surely, for all $k \in \N$.\\
	\item  By local Lipschitz continuity $\partial \J_\mb(\theta)$ is uniformly bounded for $\|\theta\| \leq M$ and any $\mb \subset \{1,\ldots,N\}$, hence the centered noise $(\xi_k)_{k \in \N}$ is a uniformly bounded martingale difference sequence.\\
    \item  By Assumption \ref{ass:mainAssumption}, the sequence $ (\gamma_k)_{k \in \N} $ is chosen such that $ \gamma_{k} = o(\frac{1}{\log k}) $ (see Section~\ref{sec::comRes}).
    \end{itemize}
        \new{Then the sufficient conditions of Remark 1.5 of \cite{benaim2005stochastic} state that the discrete process $(\theta_k,\psi_k)_{k\in\N}$ asymptotically behaves like the solutions of \eqref{eq::contdin}. We can then combine Proposition 3.27 and Theorem 3.6 of \cite{benaim2005stochastic}, 
		to obtain that the limit set $\ml$ of the discrete process is contained in the set $\ma$ where the Lyapunov has vanishing derivatives. Thus, the set $\ml_1$ (the set of the first coordinates of all accumulation points) contains only $D$-critical points of $\J$. In addition, $E(\ml)$ is a singleton, and for all  $(\theta,\psi)\in\ma$, we have $E(\theta,\psi) = \J(\theta)$, so $\J(\ml_1) $ is also a singleton and the theorem follows.}

    \section{Towards Convergence Rates for INNA}\label{sec::rateofCV}

    In the previous section, connecting INNA to \eqref{eq::contdin} was one of the keys to prove the convergence of the discrete dynamics. Let us now focus on the continuous dynamical system \eqref{eq::contdinClarke} in the deterministic case where $\J$ and $\partial\J$ are not approximated anymore---we thus no longer use $D\J$ although this would be possible but would require more technical proofs. In this section and in this section only, we pertain to loss functions $\J$ that are real semi-algebraic (a particular case of tame functions).\footnote{We could extend the results of this section to more general objects including analytic functions on bounded sets. The semi-algebraicity assumption is made here for the sake of clarity.} Recall that a set is called semi-algebraic if it is a finite union of sets of the form,
    $$\{\theta\in \R^P\mid \zeta(\theta)=0,\zeta_i(\theta)<0\}$$
    where  $\zeta,\zeta_i$ are real polynomial functions. A function is called semi-algebraic if its graph is semi-algebraic. 
    
    We will characterize the convergence rate of the solutions of the continuous-time system \eqref{eq::contdinClarke} to critical points. Let us first introduce an essential mechanism to obtain such convergence rates: the Kurdyka-{\L}ojasiewicz (KL) property.

\subsection{The Non-smooth Kurdyka-{\L}ojasiewicz  Property for the Clarke Subdifferential}

The non-smooth Kurdyka-{\L}ojasiewicz (KL) property, as introduced in \citep{bolte2010}, is a measure of ``amenability to sharpness'' \new{(as illustrated at the end of Section~\ref{sec::rateIND})}. Here we provide a uniform version for the Clarke subdifferential of semi-algebraic functions as in \cite{bolte2007clarke} and \cite{bolte2014proximal}. In the sequel we denote by ``$\dist$'' any given distance on $\R^P$.
\begin{lemma}[Uniform Non-smooth KL Property for the Clarke Subdifferential]\label{lem::UKL}
     Let $\mathsf{K}$ be a nonempty compact set and let $L:\R^P\to \R$ be a semi-algebraic locally Lipschitz continuous function. Assume that $L$ is constant on $\mathsf{K}$, with value $L^\star$. Then there exist $\varepsilon>0$,  $\delta>0$, $a\in(0,1)$ and $\rho>0$ such that, for all
     \begin{equation*}
         v\in \left\{ v\in\R^P \mid \dist(v,\mathsf{K}) < \varepsilon \right\} \cap \left\{  v\in\R^P \mid L^\star < L(v) < L^\star+\delta  \right\},
     \end{equation*}
     it holds that,
     \begin{equation}\label{eq::SemiAlgKL}
         \rho(1-a)\left(L(v) - L(\bar{v})\right)^{-a}\dist\left( 0, \partial L(v)\right)  > 1.
     \end{equation}
 \end{lemma}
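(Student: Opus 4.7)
The plan is to reduce the uniform statement to the pointwise nonsmooth Kurdyka-\L{}ojasiewicz inequality for Clarke subdifferentials of semi-algebraic locally Lipschitz functions (as established in \cite{bolte2007clarke,bolte2010}) and then paste the pointwise versions together by a compactness argument on $\mathsf{K}$, uniformizing the constants at the end. Write $L^\star$ for the common value of $L$ on $\mathsf{K}$.

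\textbf{Step 1 (pointwise KL at each point of $\mathsf{K}$).} For each $\bar v \in \mathsf{K}$, the pointwise nonsmooth KL inequality provides constants $\varepsilon_{\bar v}>0$, $\delta_{\bar v}>0$, $a_{\bar v}\in(0,1)$, $\rho_{\bar v}>0$ such that
\begin{equation*}
\rho_{\bar v}(1-a_{\bar v})\bigl(L(v)-L^\star\bigr)^{-a_{\bar v}}\,\dist\bigl(0,\partial L(v)\bigr) > 1
\end{equation*}
holds for every $v$ with $\|v-\bar v\|<\varepsilon_{\bar v}$ and $L^\star < L(v) < L^\star + \delta_{\bar v}$ (this uses that $L$ is semi-algebraic and locally Lipschitz, and that $L(\bar v)=L^\star$ because $\bar v \in \mathsf{K}$).

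\textbf{Step 2 (finite subcover).} The open balls $\{B(\bar v,\varepsilon_{\bar v}/2)\}_{\bar v\in\mathsf{K}}$ cover $\mathsf{K}$, which is compact, so finitely many of them, indexed $i=1,\dots,n$ with centers $\bar v_i$ and radii $\varepsilon_i$, suffice. Put $\varepsilon := \tfrac{1}{2}\min_i \varepsilon_i$. By the triangle inequality, any $v$ with $\dist(v,\mathsf{K})<\varepsilon$ lies in some $B(\bar v_i,\varepsilon_i)$, so the pointwise KL inequality from Step 1 applies to $v$ with constants $(\delta_i,a_i,\rho_i)$.

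\textbf{Step 3 (uniformization of the constants).} Set $a := \max_i a_i \in (0,1)$, $\delta := \min\bigl(\min_i \delta_i,\,1\bigr)$ and $\rho := \max_i \rho_i(1-a_i)/(1-a)$. For $v$ with $\dist(v,\mathsf{K})<\varepsilon$ and $L^\star < L(v) < L^\star+\delta$, pick $i$ such that $v\in B(\bar v_i,\varepsilon_i)$. Because $s\mapsto s^{-a}$ is non-decreasing in $a$ on $s\in(0,1)$ and $L(v)-L^\star\in(0,1)$, one has $(L(v)-L^\star)^{-a} \ge (L(v)-L^\star)^{-a_i}$. Combined with $\rho(1-a)\ge \rho_i(1-a_i)$, this yields
\begin{equation*}
\rho(1-a)\bigl(L(v)-L^\star\bigr)^{-a}\dist\bigl(0,\partial L(v)\bigr) \ge \rho_i(1-a_i)\bigl(L(v)-L^\star\bigr)^{-a_i}\dist\bigl(0,\partial L(v)\bigr) > 1,
\end{equation*}
which is the desired uniform KL inequality.

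\textbf{Main obstacle.} The delicate point is the uniformization of the exponent, since naively enlarging $a$ could weaken one of the factors in the inequality. The trick is to restrict to the regime $L(v)-L^\star<1$, where $(L(v)-L^\star)^{-a}$ increases in $a$ and thus makes the inequality easier to satisfy; the competing $(1-a)$ factor is then absorbed into a suitably enlarged $\rho$. Everything else reduces to standard open-cover bookkeeping.
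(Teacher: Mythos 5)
Your proposal is correct and follows essentially the same route as the paper, which simply invokes the pointwise nonsmooth KL inequality of \cite{bolte2007clarke} together with the compactness/uniformization argument of \citet[Lemma 6]{bolte2014proximal}. Your Steps 1--3 are just the spelled-out version of that cited argument, specialized to power-form desingularizing functions, and the uniformization of the exponent via $\delta\le 1$ and the enlarged $\rho$ is sound.
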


    \new{The proof directly follows from the general inequality provided in \cite{bolte2007clarke} or the local result of \cite{bolte2007clarke} with the compactness arguments of \citet[Lemma 6]{bolte2014proximal}.} In the sequel, we make an abuse of notation by writing $\Vert \partial\J(\cdot)\Vert \triangleq \dist(0,\partial \J(\cdot))$.
    To obtain a convergence rate we will use inequality \eqref{eq::SemiAlgKL} on the Lyapunov function $E$. But first we state a general result of convergence that is built around the KL property.

\subsection{A General Asymptotic Rate Result}
    We state a general theorem that leads to the existence of a convergence rate. This theorem will hold in particular for \eqref{eq::contdinClarke}. We start by stating the result.

     \begin{theorem}\label{th::genlemma}
        Let $X:[0,+\infty)\to \R^P$ be a bounded absolutely continuous trajectory and let $L:\R^P\to \R$ be a semi-algebraic locally Lipschitz continuous function. If there exists $c_1>0$ such that for a.e. $t>0$,
        \begin{equation}\label{it::2}\tag{i}
             \frac{\diff L}{\diff t} (X(t)) \leq -c_1 \Vert(\partial L)(X(t))\Vert^2,
        \end{equation}
        then $L(X(t))$ converges to a limit value $L^\star$ and,
        \[ \vert L(X(t)) - L^\star \vert = O\left(\frac{1}{t}\right). \]
        If in addition there exists $c_2>0$ such that for a.e. $t>0$,
        \begin{equation}\label{it::3}\tag{ii}
            c_2 \Vert \dot{X}(t)\Vert  \leq \Vert (\partial L)(X(t))\Vert,
        \end{equation}
    	then, $X$ converges to a critical point of $L$ with a rate of the form $O(1/t^b)$ with $b>0$.\footnote{In some cases we even have linear rates or finite convergence as detailed in the proof.}
    \end{theorem}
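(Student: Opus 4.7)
The plan is to combine the Lyapunov-type decrease in (i) with the uniform nonsmooth KL inequality of Lemma~\ref{lem::UKL} applied to the limit set of the trajectory. First, (i) implies that $t \mapsto L(X(t))$ is nonincreasing; since $X$ is bounded and $L$ is continuous, $L(X(t))$ is bounded below and converges to some $L^\star \in \R$. Boundedness of $X$ further implies that the $\omega$-limit set $\Omega(X)$ is nonempty and compact, and by continuity $L \equiv L^\star$ on $\Omega(X)$.

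Applying Lemma~\ref{lem::UKL} with $\mathsf{K} = \Omega(X)$ yields constants $\varepsilon, \delta > 0$, $a \in (0,1)$, and $\rho > 0$ such that the KL inequality $\rho(1-a)(L(v)-L^\star)^{-a}\|\partial L(v)\| > 1$ holds on the intersection of the $\varepsilon$-neighborhood of $\Omega(X)$ with the strip $\{L^\star < L < L^\star+\delta\}$. Since $\dist(X(t),\Omega(X)) \to 0$ and $\phi(t) \triangleq L(X(t)) - L^\star \to 0$, for $t$ large enough $X(t)$ lies in this KL region. Combining (i) with the KL inequality then yields the autonomous differential inequality $\dot\phi(t) \leq -c_1\|\partial L(X(t))\|^2 \leq -K \phi(t)^{2a}$ with $K = c_1/(\rho(1-a))^2$. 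Integrating via the substitution $u = \phi^{1-2a}$ gives either finite-time extinction (if $a < 1/2$), an exponential rate (if $a = 1/2$), or the polynomial bound $\phi(t) = O(t^{-1/(2a-1)})$ (if $a > 1/2$); since $a \in (0,1)$, in all three regimes the decay is at least as fast as $O(1/t)$.

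For the convergence of the trajectory under (ii), I would use the desingularizing function $\varphi(s) = \rho s^{1-a}$ underlying the KL inequality. The relation $\varphi'(\phi)\|\partial L\| > 1$ combined with (i) yields $\frac{\diff}{\diff t}\varphi(\phi(t)) \leq -c_1 \|\partial L(X(t))\|$, and integrating gives $\int_t^{+\infty} \|\partial L(X(s))\|\,\diff s \leq \varphi(\phi(t))/c_1$. Using (ii), $\int_t^{+\infty} \|\dot{X}(s)\|\,\diff s \leq \varphi(\phi(t))/(c_1 c_2)$, which proves that $X(t)$ is Cauchy and converges to a limit $\bar X$. From $\int_0^{+\infty}\|\partial L(X(s))\|^2\,\diff s < +\infty$ one extracts a subsequence along which $\|\partial L(X(t_n))\| \to 0$, and upper semicontinuity of the Clarke subdifferential combined with $X(t_n)\to\bar X$ forces $0 \in \partial L(\bar X)$. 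The rate on $X$ follows from $\|X(t) - \bar X\| \leq \varphi(\phi(t))/(c_1 c_2) = O(\phi(t)^{1-a})$ and the polynomial decay of $\phi$, yielding $\|X(t)-\bar X\| = O(1/t^b)$ with $b = (1-a)/(2a-1)$ in the polynomial regime (and an exponential or finite-time rate when $a \leq 1/2$).

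The main obstacle is converting the KL inequality into the closed differential inequality for $\phi$ and then juggling the three regimes of $a$ so as to state a single asymptotic conclusion; the worst case, $a$ close to $1$, is responsible for the stated $O(1/t)$ rate on the values. A secondary technical subtlety is confirming that $X(t)$ genuinely enters and stays in the KL neighborhood of $\Omega(X)$, which relies on the two convergences $\dist(X(t),\Omega(X)) \to 0$ and $\phi(t) \to 0$, together with the mild fact that one may assume $\phi(t) > 0$ along the tail (else the trajectory has already reached a critical value and the claim is immediate).
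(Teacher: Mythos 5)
Your proposal is correct and follows essentially the same route as the paper's proof: decrease of $L(X(\cdot))$, the uniform KL inequality of Lemma~\ref{lem::UKL} on a compact set where $L$ equals $L^\star$ (your $\Omega(X)$ versus the paper's level set intersected with a compact containing the trajectory), the resulting differential inequality $\dot\phi \leq -K\phi^{2a}$ with the case analysis in $a$, and then the desingularized length estimate combined with (ii) to get the Cauchy property, criticality of the limit via the closed graph of $\partial L$, and the rate $O(\phi(t)^{1-a})$ for the trajectory. The only differences are cosmetic (choice of the compact set, writing $\varphi(s)=\rho s^{1-a}$ explicitly, and deducing criticality from $\int\Vert\partial L(X)\Vert^2<\infty$ rather than a direct $\liminf$ argument).
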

    
    \begin{proof}
    We first prove the convergence of $L(X(\cdot))$. Suppose that \eqref{it::2} holds. Since $X$ is bounded and $L$ is continuous, $L(X(\cdot))$ is bounded. Moreover, from \eqref{it::2}, $L(X(\cdot))$ is decreasing, so it converges to some value $L^\star$. To simplify suppose $L\geq 0$ and $L^\star=0$. Define,
    \begin{equation*}
			\mathsf{I} = \left\{ x\in\R^P \ \mid \ L(x)=0 \right\}.
	\end{equation*}
	Suppose first that there exists $s\geq 0$, such that $X(s)\in \mathsf{I}$. Since $L(X(\cdot))$ is decreasing with limit $0$, then for all $t\geq s$, $L(X(t)) = 0$ and the convergence rate holds true.
	
	Let us thus assume that for all $t\geq 0$, $L(X(t))>0$. The trajectory $X$ is bounded in $\R^P$, hence there exists a compact set $\mathsf{C}\subset\R^P$ such that $X(t)\in\mathsf{C}$ for all $t\geq 0$. Define $\mathsf{K}=\mathsf{I}\cap \mathsf{C}$. It is a compact set since $\mathsf{I}$ is closed (by continuity of $L$) and $\mathsf{C}$ is compact. Moreover, $L$ is constant on $\mathsf{K}$. As such by Lemma~\ref{lem::UKL}, there exist $\varepsilon>0,\ \delta>0$, $a\in(0,1)$ and a constant $\rho>0$ such that for all
		\[ v\in \left\{ v\in\R^P, \dist(v,\mathsf{K}) < \varepsilon \right\} \cap \left\{ 0 < L(v) < \delta  \right\},  \]
		it holds that
		\[\rho(1-a)\left(L(v)\right)^{-a}\dist\left( 0, \partial L(v)\right)  > 1.\]
    We have $L(X(t))\to 0$ so there exists $t_0\geq 0$ such that for all $t\geq t_0$, $0<L(X(t))<\delta$. Without loss of generality, we assume $t_0=0$. Similarly, we have $\dist(X(t),K) \to 0$, so we may assume that for all $t\geq 0$, $\dist\left(X(t),\mathsf{K}\right)<\varepsilon$. Thus, for all $t\geq 0$,
		\begin{equation*}
		\rho(1-a) L(X(t))^{-a}\Vert \partial L(X(t)) \Vert > 1.
		\end{equation*}
	Going back to assumption~\eqref{it::2}, for a.e. $t>0$, one has
		\[ \frac{\diff L}{\diff t} (X(t)) \leq -c_1 \Vert(\partial L)(X(t))\Vert^2, \]
		but the KL property implies that for a.e. $t>0$,
		$$- \Vert \partial L (X(t)) \Vert^2 < -\frac{1}{\rho^2(1-a)^2} L(X(t))^{2a}.$$ Therefore,
		\[\frac{\diff L}{\diff t} (X(t)) < -\frac{c_1}{\rho^2(1-a)^2} L(X(t))^{2a}. \]
		We consider two cases depending on the value of $a$. If $0<a\leq 1/2$, then for $t$ large, $L(X(t))<1$ so $-L(X(t))^{2a} < -L(X(t))$ and hence, $$\frac{\diff L}{\diff t} (X(t)) < -\frac{c_1}{\rho^2(1-a)^2} L(X(t)),$$ so we obtain a linear rate.
	When $1/2<a<1$, we have for a.e. $t>0$,
		\begin{equation}
		L(X(t))^{-2a} \frac{\diff }{\diff t}L(X(t)) =\frac{1}{1-2a}\frac{\diff}{\diff t} L(X(t))^{1-2a} < - \frac{c_1}{ (\rho^2 (1 - a)^2)},
		\end{equation}
		with $1-2a<0$. We can integrate from $0$ to $t>0$:
		\begin{equation*}
		 	L(X(t))^{1-2a}> \frac{(2a -1)c_1}{\rho^2(1-a)^2}t + L(X(0))^{1-2a}>\frac{(2a -1)c_1}{\rho^2(1-a)^2}t.
		\end{equation*}
		Since $\frac{1}{1-2a}<-1$, one obtains a convergence rate of the form $O\left(t^\frac{1}{1-2a}\right)$. In both cases the rate is at least $O\left(\frac{1}{t}\right)$.

	We assume now that both \eqref{it::2} and \eqref{it::3} holds and prove convergence of the trajectory with a convergence rate. 
    Let $t>s>0$, by the fundamental theorem of calculus and the triangular inequality,
    	\begin{equation}\label{eqpr::dotX}
    	\Vert X(t) - X(s)\Vert \leq \left\|  \int_s^{t} \dot{X}(\tau) \diff \tau \right\| \leq  \int_s^{t} \Vert\dot{X}(\tau)\Vert \diff \tau.
    	\end{equation}
    	We wish to bound $\Vert \dot{X}\Vert$ using $L$. Using the chain rule (Lemma~\ref{lem::chainrule} of Section~\ref{sec:ResultsOnD}), for a.e. $\tau>0$,
    	\begin{align}
    	\begin{split}
    	\frac{\diff }{\diff \tau}L(X(\tau))^{1-a} = (1-a)L(X(\tau))^{-a} \langle \dot{X}(\tau),(\partial L)(X(\tau))\rangle.
    	\end{split}
    	\end{align}
    	Then, from \eqref{it::2}, we deduce that for a.e. $\tau>0$,
    	\begin{align}
    	\begin{split}
    	\langle \dot{X}(\tau),(\partial L)(X(\tau))\rangle = \frac{\diff L}{\diff \tau}(X(\tau)) \leq -c_1 \Vert (\partial L)(X(\tau)) \Vert^2,
    	\end{split}
    	\end{align}
    	so
    	\begin{align}\label{eq::dtL}
    	\begin{split}
    	\frac{\diff }{\diff \tau}L(X(\tau))^{1-a} \leq -c_1(1-a)L(X(\tau))^{-a} \Vert (\partial L)(X(\tau)) \Vert^2.
    	\end{split}
    	\end{align}
    	The KL property \eqref{eq::SemiAlgKL} implies that for a.e. $\tau>0$,
    	\begin{equation}\label{eq::KLofL}
    	-(1-a)L(X(\tau))^{-a}\Vert (\partial L)(X(\tau))\Vert< -\frac{1}{\rho}.
    	\end{equation}
    	Putting this in \eqref{eq::dtL} and using assumption \eqref{it::3} we finally obtain
    	\begin{align}\label{eq::dtL2}
    	\begin{split}
    	\frac{\diff }{\diff t}L(X(\tau))^{1-a} < -\frac{c_1}{\rho} \Vert (\partial L)(X(\tau)) \Vert\leq -\frac{c_1c_2}{\rho} \Vert \dot{X}(\tau)\Vert.
    	\end{split}
    	\end{align}
    	We can use that in \eqref{eqpr::dotX},
    	\begin{equation}
    	\begin{split}\label{eq::boundX}
    	\Vert X(t) - X(s)\Vert &\leq  - \frac{\rho}{c_1 c_2}\int_s^{t} \frac{\diff }{\diff t}L(X(\tau))^{1-a} \diff \tau\\
    	&=  \frac{\rho}{c_1 c_2 } (L(X(s))^{1-a}-L(X(t))^{1-a}).
    	\end{split}
    	\end{equation}
    	Then, using the convergence rate we already proved for $L$, we deduce that the Cauchy criterion holds for $X$ inside the compact (hence complete) subset $\mathsf{C}\subset\R^P$ containing the trajectory. Thus, $X$ converges, and from (i) we have that $\lim\inf_{t \to +\infty} \|\partial L(X(t))\| = 0$ because $\partial L$ has closed graph. This shows that the limit is a critical point of $L$. Finally, taking the limit in \eqref{eq::boundX} and using the convergence rate of $L$ we obtain a rate for $X$ as well.
    \end{proof}

  \begin{remark}\label{rem::shift}
     {\rm   Theorem \ref{th::genlemma} takes the form of a general recipe to obtain a convergence rate since it may be applied in many cases, to curves or flows, provided that a convenient Lyapunov function is given. 
     Note also that it is sufficient for assumptions \eqref{it::2} and \eqref{it::3} to hold only after some time $t_0>0$ as in such case, one could simply do a time shift to use the theorem.} 
    \end{remark}

    \subsection{Application to INNA}\label{sec::rateIND}%
We now  apply Theorem~\ref{th::genlemma} to the deterministic continuous dynamical model of INNA \eqref{eq::contdinClarke}.
\begin{theorem}[Convergence rates]\label{cor::InnaRate}
        Suppose that $\J$ is semi-algebraic locally Lipschitz continuous and lower bounded. Then, any bounded trajectory $(\theta,\psi)$ that solves \eqref{eq::contdinClarke} converges to a point $(\bar{\theta},\bar{\psi}) \in \ma$,  with a convergence rate of the form $O\left(t^{-b}\right)$ with $b>0$. Moreover, $\J(\theta(t))$ converges to its limit $\bar \J$ with rate $\left\vert\J(\theta(t))-\bar \J\right\vert=O\left(\frac{1}{t}\right)$.
\end{theorem}

\begin{proof}
Let $(\theta,\psi)$ be a bounded solution of \eqref{eq::contdinClarke}. We would like to use Theorem~\ref{th::genlemma} with $X =(\theta,\psi)$, and a well-chosen function. In the proof of Theorem~\ref{th::thmDIN} we proved a descent property along the trajectory for the function $E(\theta,\psi) = 2(1+\alpha\beta)\J(\theta) + \left\Vert \aaa\theta +\bb \psi \right\Vert^2 $. This function is semi-algebraic, locally Lipschitz continuous, so it remains to prove that \eqref{it::2} and \eqref{it::3} hold for $E$ along $(\theta,\psi)$.

 For $t\geq 0$, denote $w(t) = (\alpha-\frac{1}{\beta})\theta(t) +\frac{1}{\beta}\psi(t)$, then according to Lemma~\ref{lem::dEdt} for a.e. $t>0$,
    \begin{align}\label{eq::valofDT}
        \begin{split}
        \frac{\diff E}{\diff t}(\theta(t),\psi(t)) &=- \Vert \sqrt{\alpha} \dot{\theta}(t) - \frac{1}{\sqrt{\beta}} \left( \dot{\psi}(t) - \dot{\theta}(t) \right)\Vert^2 - \Vert \sqrt{\alpha} \dot{\theta}(t) + \frac{1}{\sqrt{\beta}} \left( \dot{\psi}(t) - \dot{\theta}(t) \right)\Vert^2\\
        &=  -2\alpha \Vert \dot{\theta}(t)\Vert^2 - \frac{2}{\beta} \Vert \dot{\psi}(t) -\dot{\theta}(t) \Vert^2
        = -2\alpha \Vert \dot{\theta}(t) \Vert^2 - \frac{2}{\beta} \Vert \beta \partial \J(\theta(t)) \Vert^2\\
        &= -2\alpha \Vert -\beta\partial \J(\theta(t)) -w(t) \Vert^2 - 2\beta \Vert \partial \J(\theta(t)) \Vert^2.
        \end{split}
    \end{align}
    On the other hand, by standard results on the sum rule, we have for all $(\theta,\psi)\in\R^P\times\R^P$,
     \begin{equation}
         \partial E(\theta,\psi) = 2\begin{pmatrix}(1+\alpha\beta)\partial \J(\theta) + (\alpha-\frac{1}{\beta})\left( (\alpha-\frac{1}{\beta})\theta + \frac{1}{\beta}\psi\right)
         \\  \frac{1}{\beta} \left( (\alpha-\frac{1}{\beta})\theta + \frac{1}{\beta}\psi\right)
         \end{pmatrix},
     \end{equation}
    so for a.e. $t>0$,
    \begin{align}\label{eq::NormOfGrad}
        \frac{\left\| \partial E(\theta(t),\psi(t))\right\|^2}{4} = &\left\| (1+\alpha\beta)\partial \J(\theta(t)) + (\alpha-\frac{1}{\beta})w(t) \right\|^2 + \left\| \frac{1}{\beta} w(t) \right\|^2.
    \end{align}
    We wish to find $c_1>0$, such that $\frac12\frac{\diff E}{\diff t} + \frac{c_1}{4}\Vert \partial E\Vert^2 < 0$. This follows from the following claim.

        \textit{Claim}: let $r_1>0$, $r_2\in\R$, $r_3>0$, then there exist $C_1$ and $C_2$ two positive constants such that for any $a,b\in\R$,
        \begin{equation}\label{eq::analysisinequality}
                C_1(a^2+b^2) \leq (r_1 a +r_2 b)^2 + r_3 b^2 \leq C_2(a^2+b^2).
        \end{equation}
        Indeed, the function $Q:(a,b) \mapsto (r_1 a +r_2 b)^2 + r_3 b^2$ is a positive definite quadratic form, $C_1$ and $C_2$ can be taken to be two eigenvalues of the positive definite matrix which represents $Q$. Hence  \eqref{eq::analysisinequality} holds for all $a$ and $b$.

    Applying the previous claim to \eqref{eq::NormOfGrad} and \eqref{eq::valofDT} leads to the existence of $c_1>0$ such that for a.e. $t>0$,\begin{equation*}
        \frac{\diff E}{\diff t}(\theta(t),\psi(t))\leq - c_1 \Vert \partial E (\theta(t),\psi(t))\Vert^2,
    \end{equation*} so assumption \eqref{it::2} holds for INNA.

    It now remains to show that \eqref{it::3} of Theorem~\ref{th::genlemma} holds i.e., that there exists $c_2>0$ such that for $(\theta,\psi)$ solution of \eqref{eq::contdinClarke} and for a.e. $t>0$, $\Vert \partial E(\theta(t),\psi(t))\Vert^2\geq c_2 \left( \Vert \dot{\theta}(t)\Vert^2 + \Vert \dot{\psi}(t)\Vert^2 \right)$.
    Using \eqref{eq::contdinClarke} and \eqref{eq::NormOfGrad} we obtain:
    \begin{align}\label{eq::assum3}
        \begin{split}
            \frac{\Vert \partial E(\theta(t),\psi(t))\Vert^2}{4} =\left\| \frac{1}{\beta}(1+\alpha\beta)\dot{\theta}(t) + \left[(\alpha-\frac{1}{\beta}) -\frac{1}{\beta}(1+\alpha\beta)\right]\dot{\psi}(t)\right\|^2 + \frac{1}{\beta^2}\Vert\dot{\psi}(t)\Vert^2 
        \end{split},
    \end{align}
    and applying the claim \eqref{eq::analysisinequality} again to \eqref{eq::assum3} one can show that there exist $c_2>0$, such that for a.e. $t>0$, \[\Vert \partial E(\theta(t),\psi(t))\Vert^2\geq c_2 \left( \Vert \dot{\theta}(t)\Vert^2 + \Vert \dot{\psi}(t)\Vert^2 \right).\] So assumption \eqref{it::3} holds for \eqref{eq::contdinClarke}. To conclude, we can apply Theorem~\ref{th::genlemma} to \eqref{eq::contdinClarke} and the proof is complete.
    \end{proof}

    \begin{remark}\label{rem::rateofIND}
       \rm{(a)  \new{Since the discrete algorithm INNA asymptotically resembles its continuous-time version (see the proof of Theorem~\ref{th::thmDIN}), the results above suggest that similar behaviors and rates could be hoped for INNA itself. Yet, these results remain difficult to obtain in the case of DL, in particular in the mini-batch setting because of the noise $(\xi_k)_{k\in\N}$.}\\
       (b) The proof above is significantly simpler when $\alpha\beta>1$ since \citet{alvarez2002second} proved that in this case, \eqref{eq::contdinClarke} is equivalent to a gradient system, thus assumptions \eqref{it::2} and \eqref{it::3} of Theorem~\ref{th::genlemma} instantly hold.}\\
       \rm{ (c)  Theorems \ref{th::genlemma} and \ref{cor::InnaRate} can be adapted to the case where the Clarke subdifferential is replaced by $D\J$, but we do not state it here for the sake of simplicity.}\\
       \rm{ (d) \new{Theorems~\ref{th::genlemma} and~\ref{cor::InnaRate} are actually valid by assuming that $\J$ belongs to a polynomially bounded o-minimal structure. One of the most common instance of such structures is the one given by globally subanalytic sets (as illustrated in a example below). We refer to \cite{bolte2007lojasiewicz} for a definition and further references.}}
    \end{remark}
    
    \new{Let us now comment the results of Theorem~\ref{cor::InnaRate}. First, we restrained here to semi-algebraic loss functions $\J$, which are a subset of tame loss functions. Most networks, activation functions and dissimilarity measures mentioned in Section~\ref{sec::favstruct} fall into this category. Nonetheless, the loss functions of the DL experiments of Section~\ref{sec::trainDL} are not semi-algebraic.  Indeed, the dissimilarity measure $l$ used is the cross-entropy: $l(f(x_n,\theta),y_n)= - \sum_{d=1}^D \indic_{[y_{n}]_d=1}\log( [f(x_n,\theta)]_d)$. Such a function cannot be described by polynomials and presents a singularity whenever $[f(x_n,\theta)]_d=0$. Fortunately, due to the numerical precision but also to the ``soft-max'' functions often used in classification experiments, the outputs of the network $f$, for inputs restricted to a compact set, have values in $[\varepsilon,1]$ for some small $\varepsilon>0$. Therefore, the singularity at $0$ is harmless and the cross-entropy acts as a globally subanalytic function. As a consequence the non-smooth {\L}ojasiewicz inequality holds, and the theorems  apply (see also numerical experiments).
        
         The rate of convergence of the trajectory in Theorem~\ref{cor::InnaRate} is non-explicit in the sense that the exponent $b>0$  is unknown in general. In the light of the proof of Theorem~\ref{th::genlemma}, this exponent depends on the KL exponent $a$ of the Lyapunov function, which is itself hard to determine in practice. However, the intuition is that small exponents $a$ may yield faster convergence rates (indeed, when $a\in(0,1/2)$ we actually have a linear rate). As an example, for the function: $t\in\R\mapsto \vert t \vert^c$ with $c>1$, the exponent at $0$ is $a=1-\frac{1}{c}$ and thus, the closer 
         $c$ is to $1$, the smaller $a$ is, and the faster the convergence becomes.
    }

    \section{Experiments}
	\label{sec:numerics}

In this section we first discuss the role and influence of the hyper-parameters of INNA using the 2D example given in Figure~\ref{fig::rosenbrockexp}. We then compare INNA with SGD, ADAGRAD and ADAM on deep learning problems for image recognition.

	\subsection{Understanding the Role of the Hyper-parameters of INNA}\label{sec::HPmeaning}
 Both  hyper-parameters $\alpha$ and $\beta$ can be seen as damping coefficients from the viewpoint of mechanics as discussed by \cite{alvarez2002second} and sketched in the introduction. Recall the second-order time continuous dynamics which served as a model to the design of INNA:
	   \begin{equation*}
    	  \ddot{\theta}(t)+\alpha\,\dt(t) + \beta\, \nabla^2 \J(\theta(t))\dt(t) + \nabla \J(\theta(t))=0.
	   \end{equation*}
	This differential equation was inspired by Newton's second law of dynamics asserting that the acceleration of a material point   coincides with the sum of forces applied to the particle. As recalled in the introduction three forces are at stake: the gravity and two friction terms. The parameter $\alpha$ calibrates the {\em viscous damping} intensity as in the Heavy Ball friction method of \cite{polyak1964some}. It acts as a dissipation term but it can also be seen as a proximity parameter of the system with the usual gradient descent: the higher $\alpha$ is, the more DIN behaves like a pure gradient descent.\footnote{This is easier to see when one rescales $\J$ by $\alpha$.} On the other hand the parameter $\beta$ can be seen as a \textit{Newton damping} which takes into account the geometry of the landscape to brake or accelerate the dynamics in an adaptive anisotropic fashion, see \cite{felipe,alvarez2002second} for further insights.

	We now turn our attention to INNA, and illustrate the versatility of the hyper-parameters $\alpha$ and   $\beta$ in this case. We proceed on a 2D visual non-smooth ill-conditioned example \`a la Rosenbrock, see Figure~\ref{fig::rosenbrockexp}. For this example, we aim to find the minimum of the function $\J(\theta_1,\theta_2)= 100(\theta_2-\vert \theta_1 \vert)^2 +\vert 1-\theta_1\vert $. This function has a {\em V}-shaped valley, and a unique critical point at $(1,1)$ which is also the global minimum. Starting from the point $(-1,1.5)$ (the black cross), we apply INNA with constant steps $\gamma_k=10^{-4}$. Figure \ref{fig::rosenbrockexp} shows that when $\beta$ is too small, the trajectory presents many transverse oscillations as well as longitudinal ones close to the critical point (subplot a). Then, increasing $\beta$ significantly reduces  transverse oscillations (subplot b). Finally, the longitudinal oscillations are reduced by choosing a higher $\alpha$ (subplot c). In addition, these behaviors are also reflected in the values of the objective function (subplot d).
	The orange curve (first setting) presents large oscillations. Moreover, looking at the red curve, corresponding to plot (c), there is a short period between $20,000$ and $60,000$ iterations when the decrease is slower than for the other values of $\alpha$ and $\beta$, but still it presents fewer oscillations. In the longer term, the third choice ($\alpha=1.3$, $\beta=0.1$) provides remarkably good performance.%
	
    \new{The choice of these hyper-parameters may come with rates of convergence for convex and strongly convex smooth functions \citep{attouch2019first}. Following this work, one may also consider to make $\alpha$ and $\beta$ vary in time (for example like the famous Nesterov damping coefficient $\frac{\alpha}{t}$). In our DL experiments we will however keep these parameters constant so that our theorems still hold. Yet, different behaviors depending on $(\alpha,\beta)$ can also be observed for DL problems as illustrated on Figure~\ref{fig::Multiparam} and described next.} Although we did not evidence some universal method to choose $(\alpha,\beta)$, we used mechanical intuitions to tune these parameters. The coefficient $\alpha$ induces viscous damping, thus one may try to reduce it when convergence appears to be slow. On the other hand, one may want to increase $\beta$ when large oscillations are observed. Yet, since $\beta$ affects directly  the subgradient effect in \eqref{eq::discdin}, taking $\beta$ too large may jeopardize the numerical stability of the algorithm. It would be interesting to relate the roles of these coefficients with more structural aspects of the dynamics. Indeed, as mentioned in Remark~\ref{rem::rateofIND}-b, when $\alpha\beta\geq 1$, \eqref{eq::contdin} can be shown to be a gradient system. On the other hand, when $\alpha\beta<1$ the dynamics is of a different type, we refer to \citet{alvarez2002second} for further comments on this matter.

	\subsection{Training a DNN with INNA}\label{sec::trainDL}
	    \begin{figure}[t]
        \begin{center}
            \begin{minipage}[c]{0.33\textwidth}
                \centering \footnotesize (a) CIFAR-10
            \end{minipage}%
            \begin{minipage}[c]{0.33\textwidth}
                \centering \footnotesize (b) CIFAR-100
            \end{minipage}%
            \begin{minipage}[c]{0.33\textwidth}
                \centering \footnotesize (c) MNIST
            \end{minipage}%
            
            \begin{minipage}[c]{0.33\textwidth}
                \centering
                \includegraphics[clip, trim=0.40cm 0.5cm 0.45cm 0.3cm, width=0.95\linewidth]{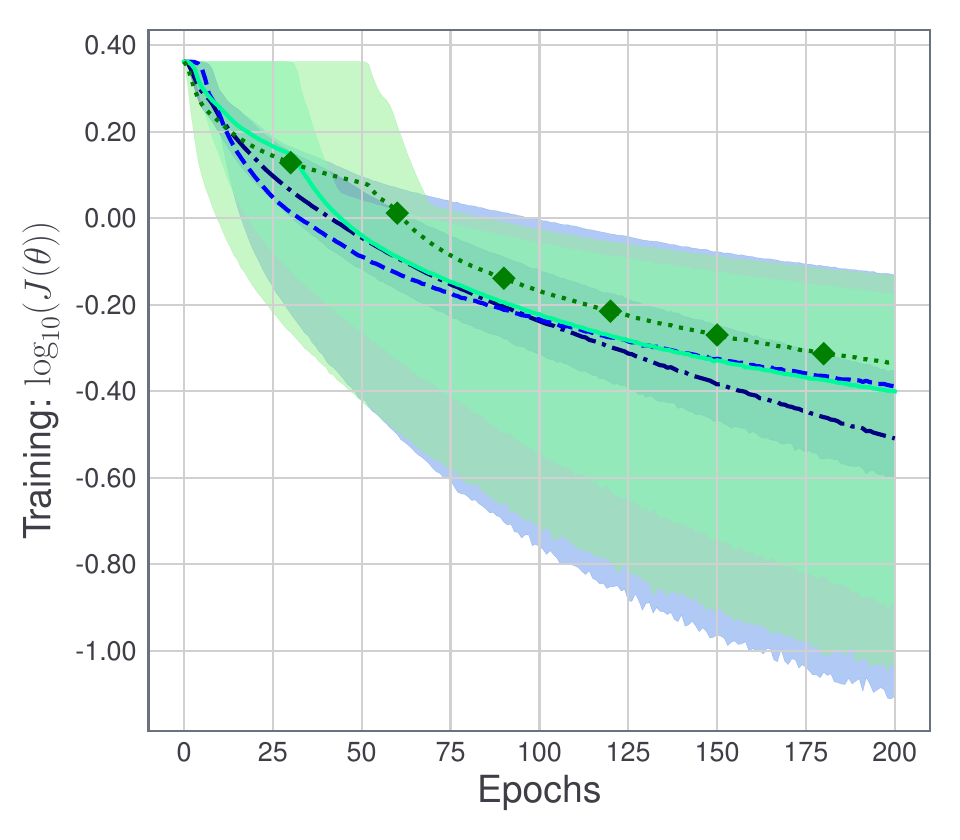}
            \end{minipage}%
            \begin{minipage}[c]{0.33\textwidth}
                \centering
                \includegraphics[clip, trim=0.40cm 0.5cm 0.35cm 0.3cm, width=0.95\linewidth]{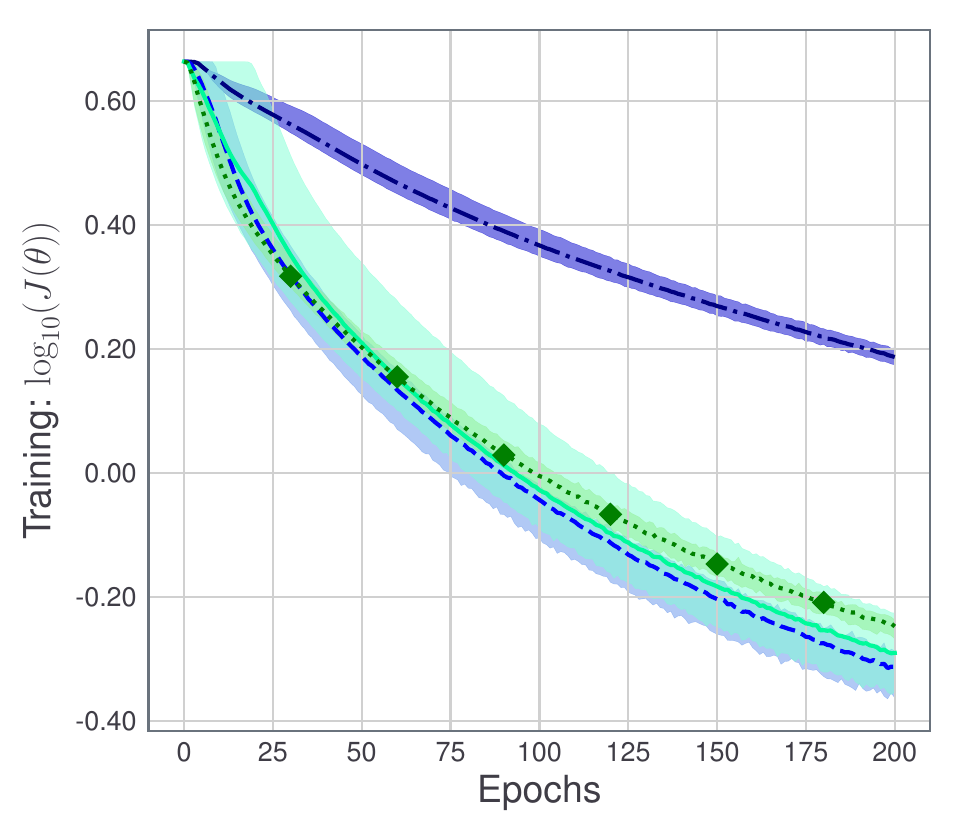}
            \end{minipage}%
            \begin{minipage}[c]{0.33\textwidth}
                \centering\includegraphics[clip, trim=0.40cm 0.5cm 0.35cm 0.3cm, width=0.95\linewidth]{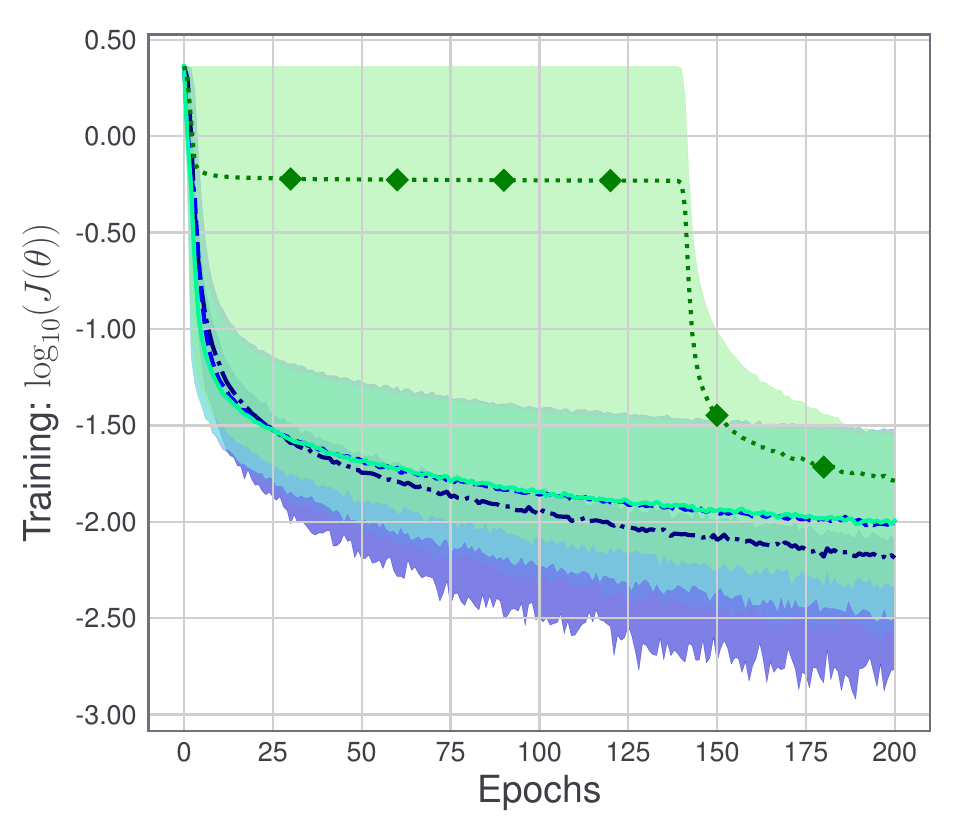}
            \end{minipage}%
            
            \begin{minipage}[c]{0.33\textwidth}\centering\includegraphics[clip, trim=0.2cm 0.5cm 0.35cm 0.3cm, width=0.95\linewidth]{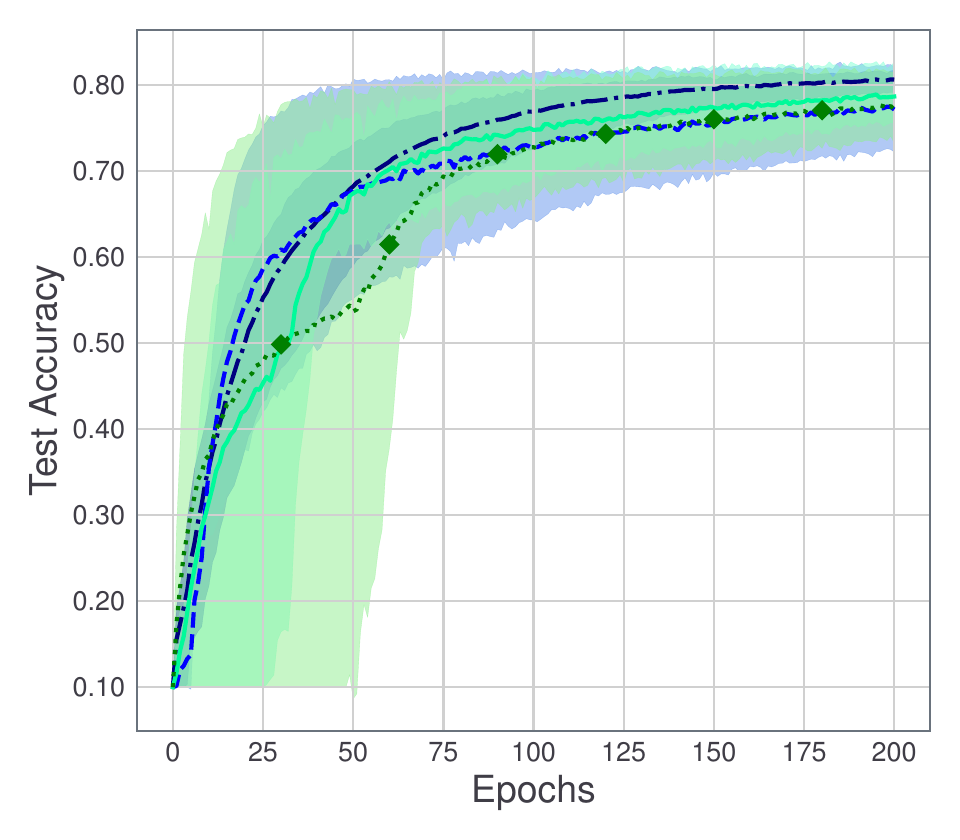}
            \end{minipage}%
            \begin{minipage}[c]{0.33\textwidth}\centering\includegraphics[clip, trim=0.5cm 0.5cm 0.3cm 0.5cm, width=0.95\linewidth]{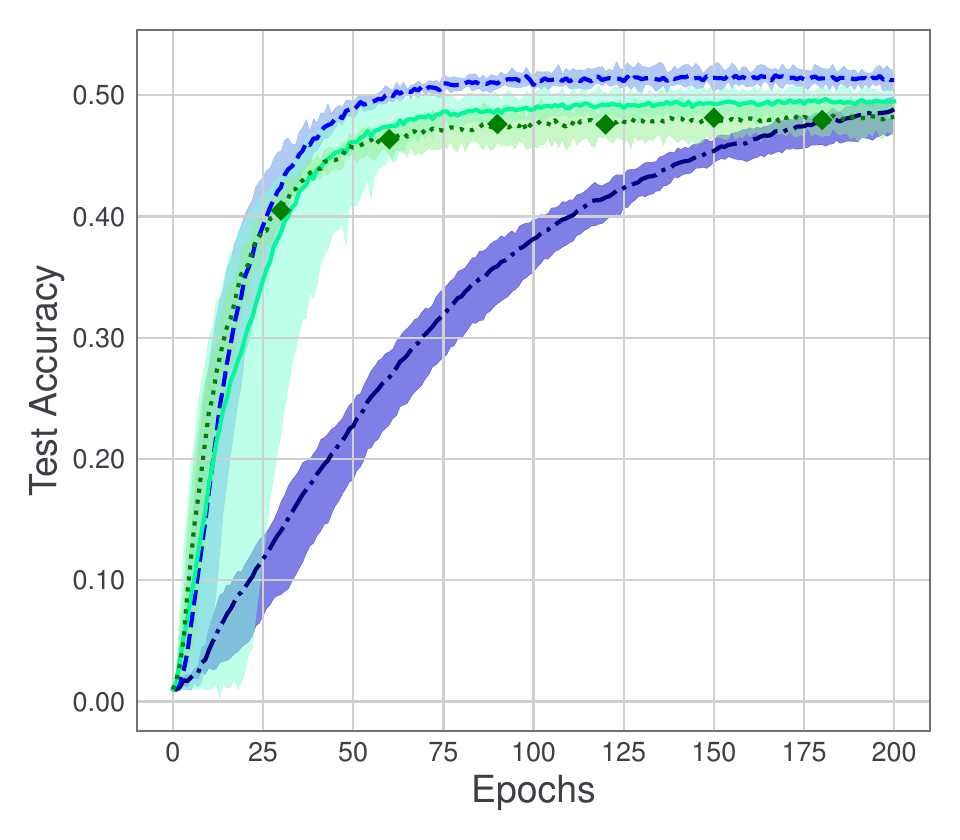}
            \end{minipage}%
            \begin{minipage}[c]{0.33\textwidth}\centering\includegraphics[clip, trim=0.5cm 0.5cm 0.3cm 0.5cm, width=0.95\linewidth]{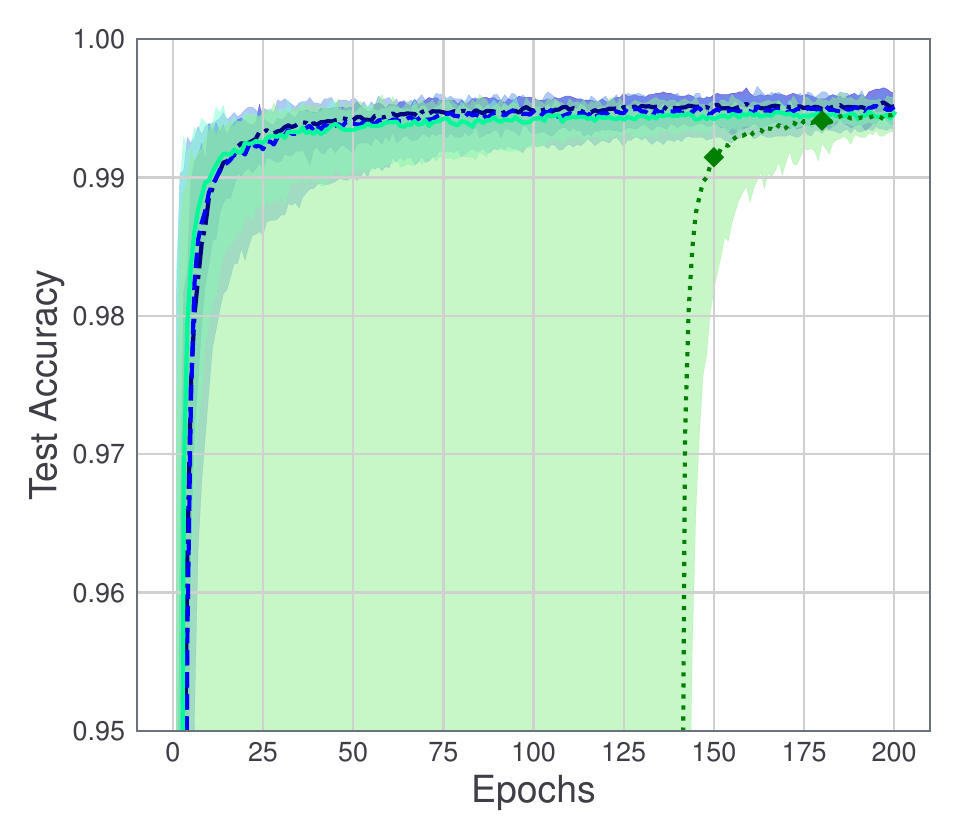}
            \end{minipage}%
            
            \begin{minipage}[c]{0.8\linewidth}
                \centering\includegraphics[width=0.45\linewidth]{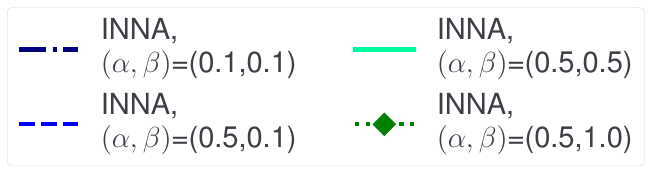}
            \end{minipage}%
            \caption{Analysis of the sensibility of INNA to the choice of $\alpha$ and $\beta$ using NiN for three different image classification problems. Top:  logarithm of the loss function $\J(\theta)$ during the training. Bottom: classification accuracy on the test set.}\label{fig::Multiparam}
        \end{center}
    \end{figure}

	Before comparing INNA to state-of-the-art algorithms in DL, we first describe the methodology that we followed.

	\subsubsection{Methodology}
	\begin{itemize}
	    \renewcommand\labelitemi{--}
	    \item We train a DNN for classification using the three most common image data sets (MNIST, CIFAR-10, CIFAR-100) \citep{lecun1998gradient,krizhevsky2009learning}. These data sets are composed of $60,000$ small images associated with a label (numbers, objects, animals, etc.). We split the data sets into $50,000$ images for training and $10,000$ for testing.
	    \item 	Regarding the network, we use a slightly modified version of the popular Network in Network (NiN) \citep{lin2013network}. It is a reasonably large convolutional network with $P\sim10^6$ parameters to optimize. We use ReLU activation functions.
	    \item The dissimilarity measure $l$ that is used in the empirical loss $\J$ given by \eqref{eq::loss} is set to the cross-entropy.
	    The loss function $\J$ is optimized with respect to $\theta$ (the weights of the DNN) on the training data. The classification accuracy of the trained DNN is measured using the test data of $10,000$ images. Measuring the accuracy boils down to counting how many of the $10,000$ were correctly classified (in percentage).
	    \item 	Based on the results of Section \ref{sec::HPmeaning}, we run INNA for four different values of $(\alpha,\beta)$: \[(\alpha,\beta)\in \left\{(0.1,0.1),(0.5,0.1),(0.5,0.5),(0.5,1)\right\}.\]
    	Given an initialization of the weights $\theta_0$, we 
    	initialize $\psi_0$ such that the initial
    	velocity is in the direction of $-\nabla 
    	\J(\theta_0)$. More precisely, we use $ \psi_0 = 
    	(1-\alpha\beta)\theta_0 
    	-(\beta^2-\beta)\nabla \J(\theta_0)$.
    	\item We compare our algorithm INNA with the classical SGD algorithm and the popular ADAGRAD \citep{duchi2011adaptive} and ADAM \citep{kingma2014adam} algorithms. At each iteration $k$, we compute the approximation of $\partial \J(\theta)$ on a subset $\mb_k\subset\left\{1,\ldots,50,000\right\}$ of size $32$. The algorithms are initialized with the same random weights (drawn from a normal distribution). Five random initializations are considered for each experiment.
    	\item Regarding the selection of step-sizes, ADAGRAD and ADAM both use an adaptive procedure based on past gradients, see \cite{duchi2011adaptive,kingma2014adam}. For the other two algorithms (INNA and SGD), we use the classical step-size schedule $\gamma_k=\frac{\gamma_0}{\sqrt{k+1}}$, which meets Assumption~\ref{ass:mainAssumption}.  For all four algorithms, choosing the right initial step length $\gamma_0$ is often critical in terms of efficiency. \new{We choose this $\gamma_0$ using a grid-search: for each algorithm we select the initial step-size that most decreases the training error $\J$ after fifteen epochs (one epoch consisting in a complete pass over the data). The test data is not used to choose the initial step-size nor other hyper-parameters.} Note that we could use more flexible step-size schedules but chose a standard schedule for simplicity. Other decay schemes are considered in Figure~\ref{fig::decay}.

	\end{itemize}
     For these experiments, we used \texttt{Keras 2.2.4} \citep{chollet2015} with \texttt{Tensorflow 1.13.1} \citep{abadi2016tensorflow} as backend. The INNA algorithm is available in Pytorch, Keras and Tensorflow: \url{https://github.com/camcastera/Inna-for-DeepLearning/} \citep{castera2019github}.

	\subsubsection{Results}\label{sec:numreslenet}
    
    Figure~\ref{fig::Multiparam} displays the training loss $\J$ and test accuracy with respect to epochs for INNA in its four hyper-parameter configurations considered and for the three data sets considered. Figure~\ref{fig::vsothalgo} displays the performance of INNA \new{with the hyper-parameter configuration that led to the smallest average training error in Figure~\ref{fig::Multiparam}}, with comparison to SGD, ADAGRAD and ADAM. In these two figures (and also in subsequent Figure~\ref{fig::decay}), solid lines represent mean values and pale surfaces represent the best and worst runs in terms of training loss and validation accuracy over five random initializations.

 		Figure~\ref{fig::Multiparam} suggests that the tuning of the hyper-parameters $\alpha$ and $\beta$ is not crucial to obtain satisfactory results both for training and testing. It mostly affects the training speed. Thus, INNA looks quite stable with respect to these hyper-parameters. Setting $(\alpha,\beta)=(0.5,0.1)$ appears to be a good default choice when necessary. Nevertheless, tuning these hyper-parameters is of course advised to get the most from INNA.
 		
 		\begin{figure}[t]
     	\begin{center}
                \begin{minipage}[c]{0.33\textwidth}
 				\centering \footnotesize (a) CIFAR-10
 				\end{minipage}%
 				\begin{minipage}[c]{0.33\textwidth}
 				\centering \footnotesize (b) CIFAR-100
 				\end{minipage}%
 				\begin{minipage}[c]{0.33\textwidth}
 				\centering \footnotesize (c) MNIST
 				\end{minipage}%

 				\begin{minipage}[c]{0.33\textwidth}
                \centering
 						\includegraphics[clip, trim=0.40cm 0.5cm 0.45cm 0.3cm, width=0.95\linewidth]{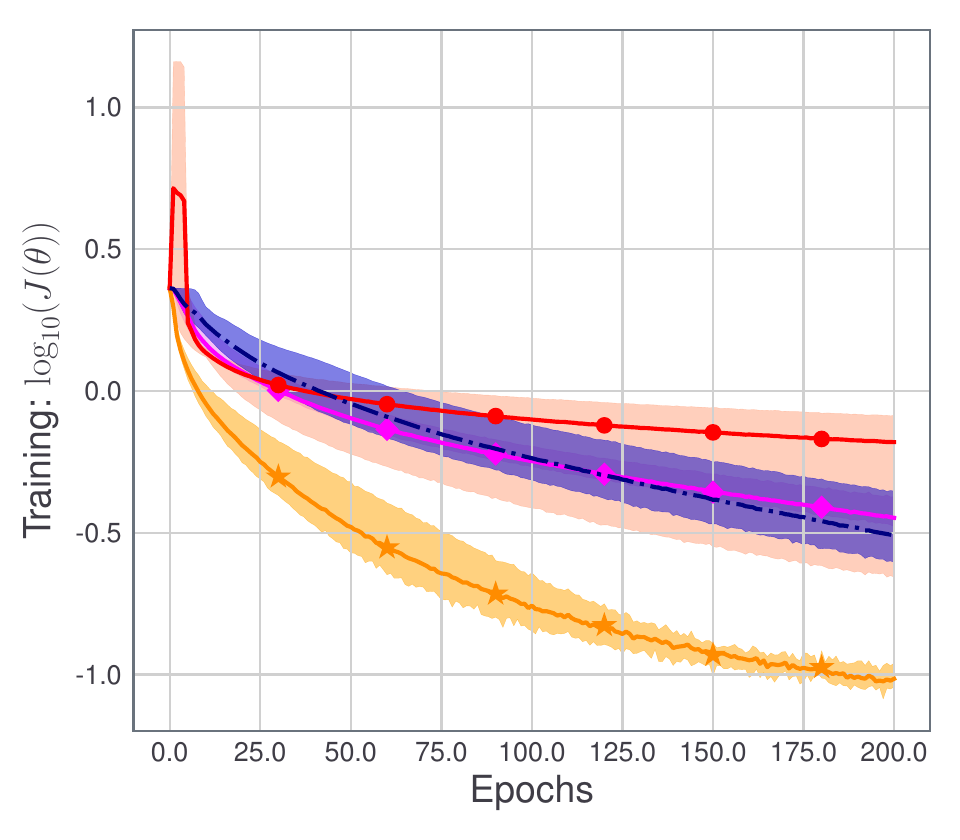}
 				\end{minipage}%
 				\begin{minipage}[c]{0.33\textwidth}
 				\centering
 						\includegraphics[clip, trim=0.40cm 0.5cm 0.35cm 0.3cm, width=0.95\linewidth]{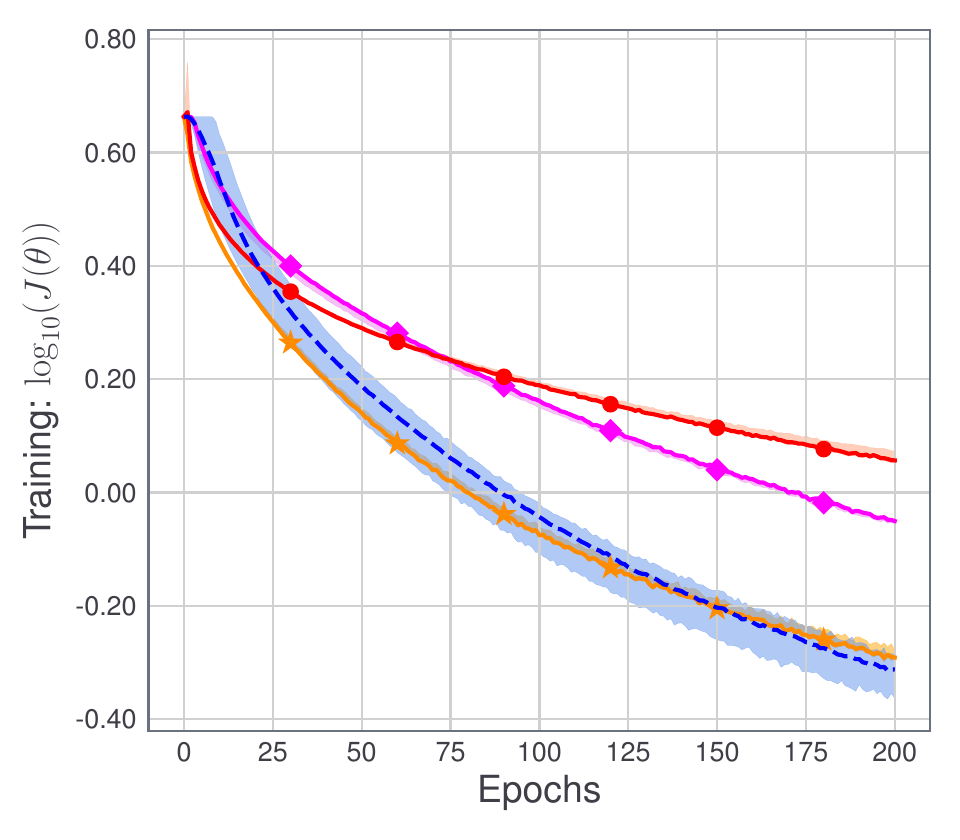}
 				\end{minipage}%
 				\begin{minipage}[c]{0.33\textwidth}
						\centering\includegraphics[clip, trim=0.40cm 0.5cm 0.35cm 0.3cm, width=0.95\linewidth]{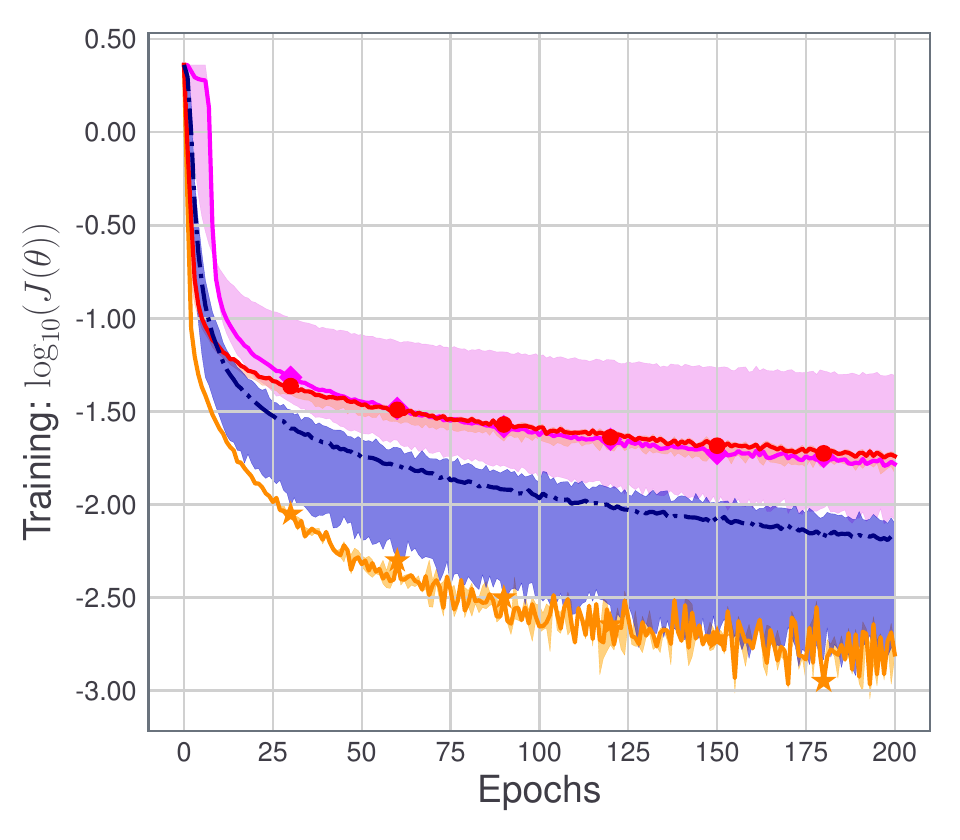}
				\end{minipage}%

 				\begin{minipage}[c]{0.33\textwidth}\centering\includegraphics[clip, trim=0.2cm 0.5cm 0.35cm 0.3cm, width=0.95\linewidth]{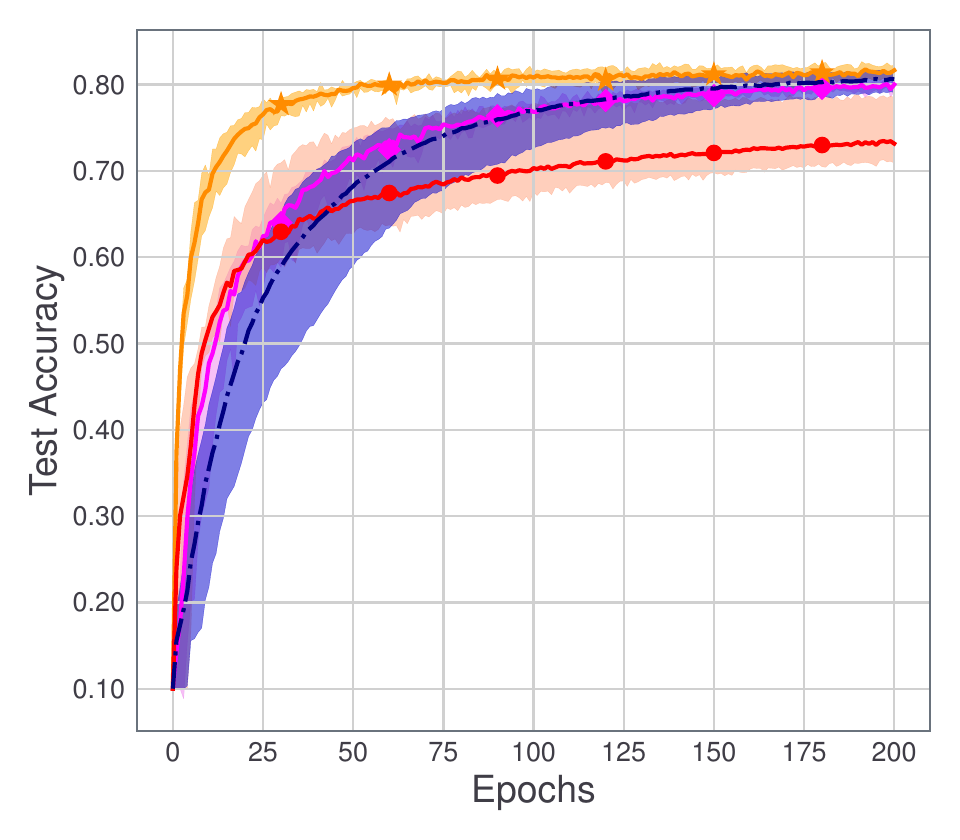}
 				\end{minipage}%
 				\begin{minipage}[c]{0.33\textwidth}\centering\includegraphics[clip, trim=0.5cm 0.5cm 0.3cm 0.5cm, width=0.95\linewidth]{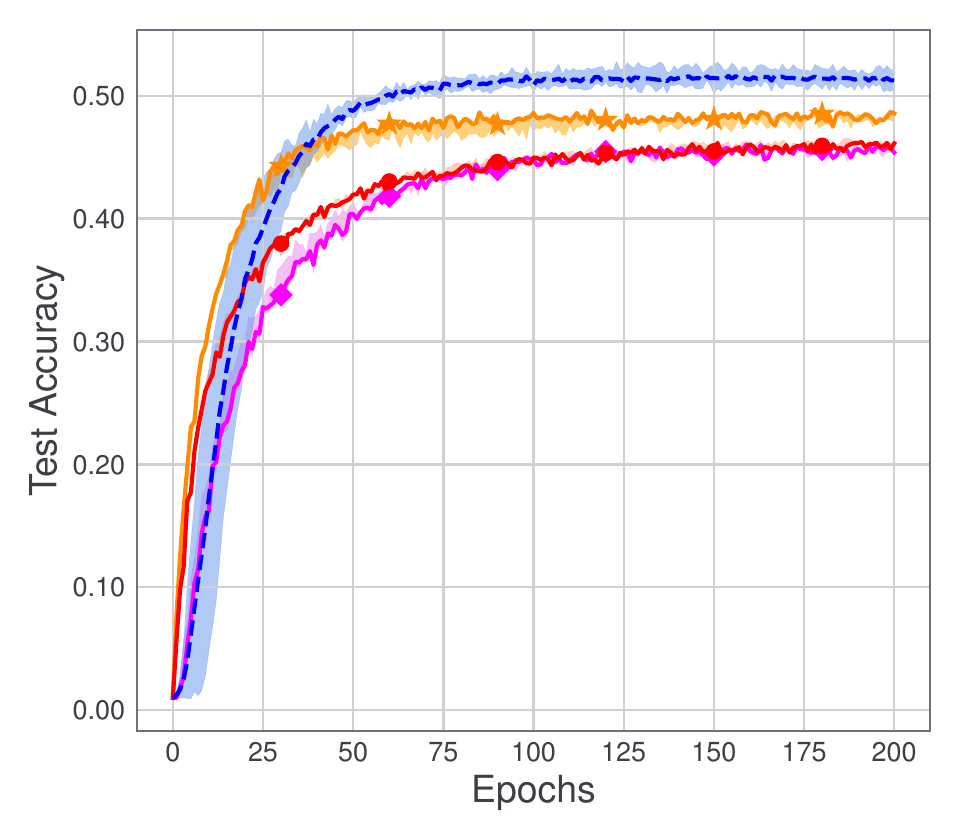}
				\end{minipage}%
				\begin{minipage}[c]{0.33\textwidth}\centering\includegraphics[clip, trim=0.5cm 0.5cm 0.3cm 0.5cm, width=0.95\linewidth]{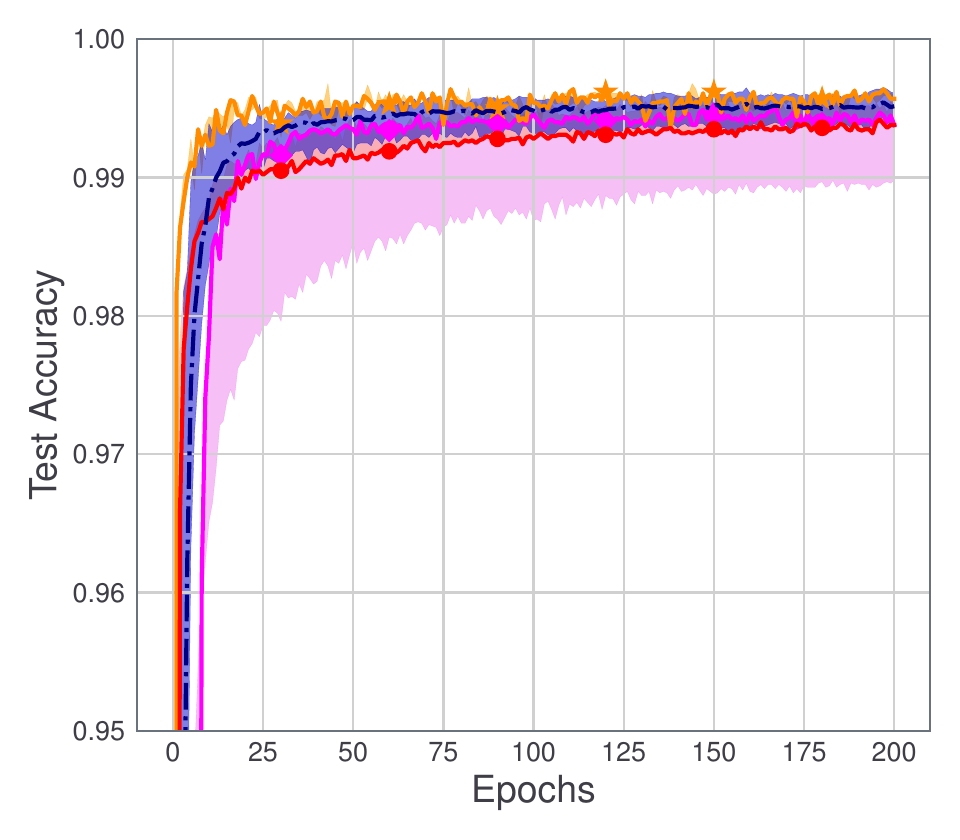}
 				\end{minipage}%

 			 	\begin{minipage}[c]{0.8\linewidth}
 			 	\centering\includegraphics[width=0.45\linewidth]{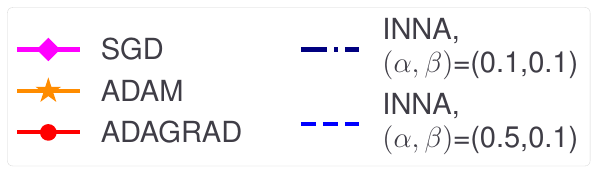}
 			 	\end{minipage}%
 			 	\caption{Comparison of INNA with state-of-the-art algorithms SGD, ADAM and ADAGRAD. Top: logarithm of the loss function $\J(\theta)$ during the training. Bottom: classification accuracy on the test set.
 			 	}
 			 	\label{fig::vsothalgo}
 			 	\end{center}
 		\end{figure}
 	
Figure~\ref{fig::vsothalgo} shows that best performing methods achieve state-of-the art accuracy using NiN and represent what can be achieved with a moderately large network and coarse grid-search tuning of the initial step-size. In our comparison, INNA and ADAM outperform SGD and ADAGRAD for training. While ADAM seems to be faster in the early training phase, INNA achieves the best accuracy almost every time especially on CIFAR-100 (Figure~\ref{fig::vsothalgo}(b)). Thus, INNA appears to be competitive in comparison to other algorithms with the advantage of having solid theoretical foundations and a simple step-size rule as compared to ADAM and ADAGRAD.

 		\begin{figure}[t]
     	\begin{center}
                \begin{minipage}[c]{0.33\textwidth}
 				\centering \footnotesize (a) CIFAR-10
 				\end{minipage}%
 				\begin{minipage}[c]{0.33\textwidth}
 				\centering \footnotesize (b) CIFAR-100
 				\end{minipage}%
                \begin{minipage}[c]{0.33\textwidth}
 				\centering \footnotesize (c) MNIST
 				\end{minipage}%

 				\begin{minipage}[c]{0.33\textwidth}
                \centering
 						\includegraphics[clip, trim=0.40cm 0.5cm 0.45cm 0.3cm, width=0.95\linewidth]{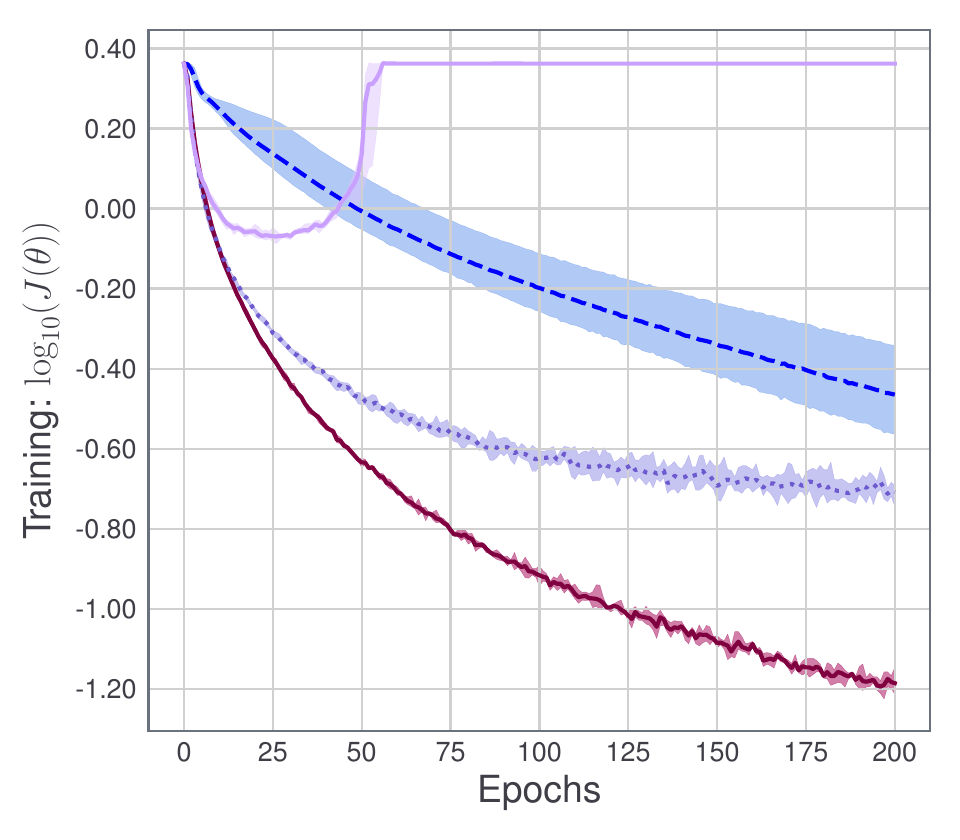}
 				\end{minipage}%
 				\begin{minipage}[c]{0.33\textwidth}
 				\centering
 						\includegraphics[clip, trim=0.40cm 0.5cm 0.35cm 0.3cm, width=0.95\linewidth]{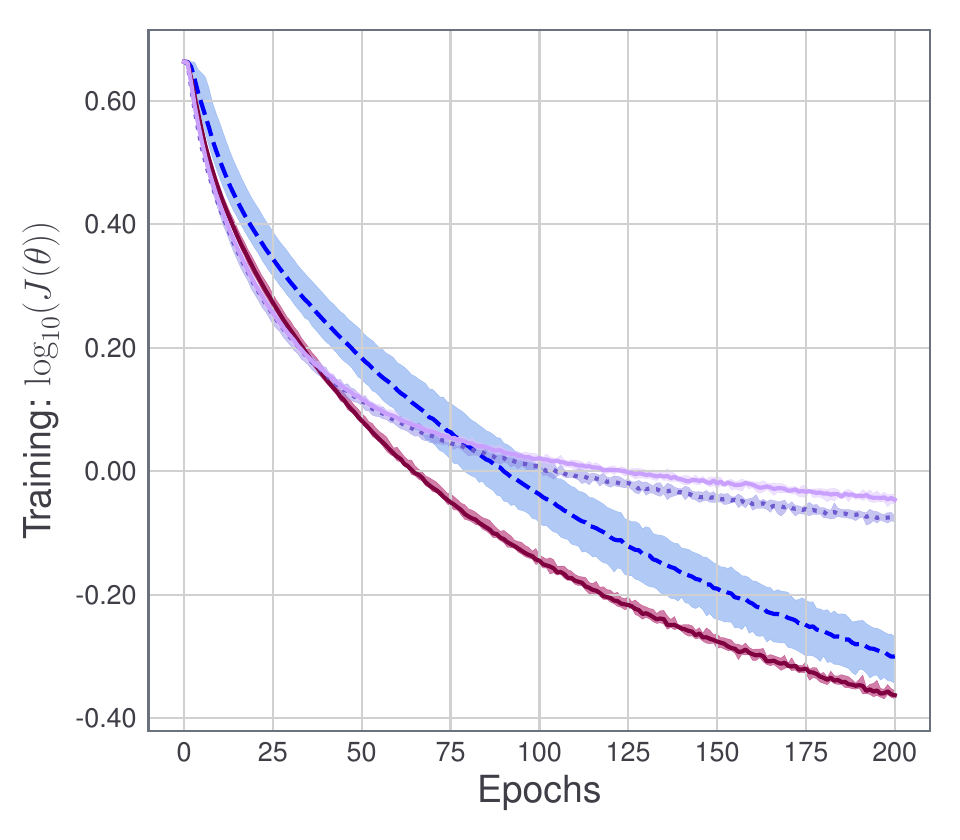}
 				\end{minipage}%
 				\begin{minipage}[c]{0.33\textwidth}
 				\centering
 						\includegraphics[clip, trim=0.40cm 0.5cm 0.35cm 0.3cm, width=0.95\linewidth]{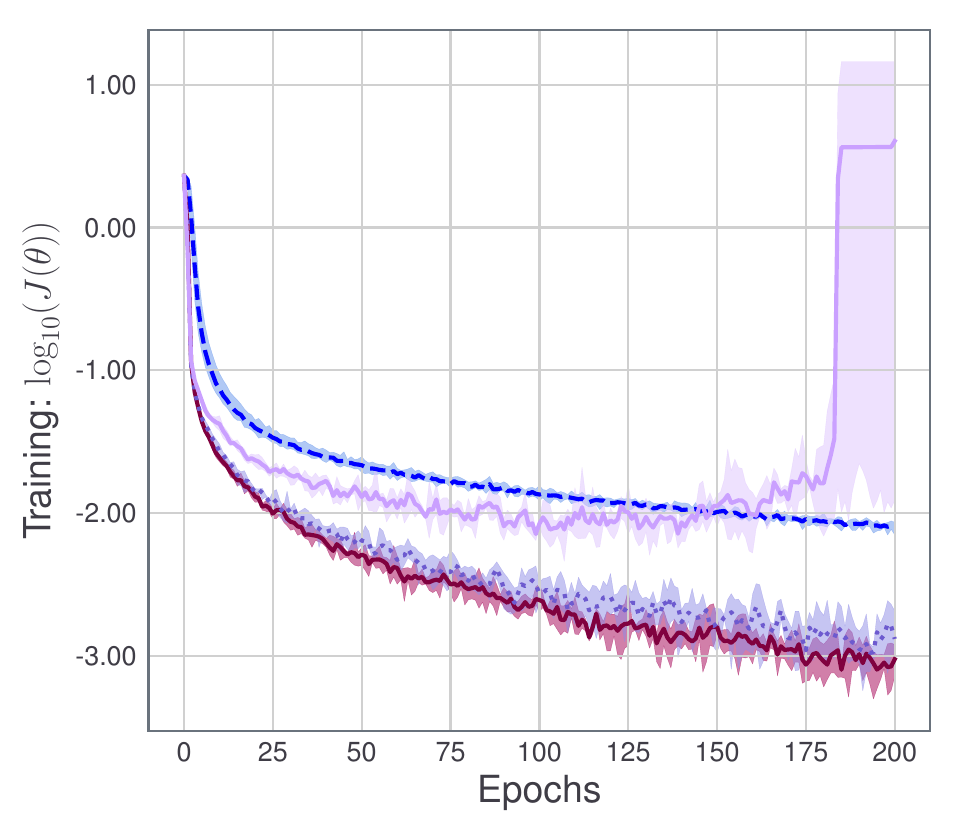}
 				\end{minipage}%

 			 	\begin{minipage}[c]{0.8\linewidth}
 			 	\centering\includegraphics[width=0.4\linewidth]{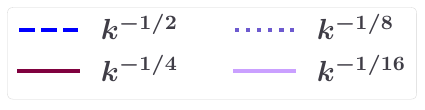}
 			 	\end{minipage}%
 			 	
 				\begin{minipage}[c]{0.33\textwidth}
                \centering
 						\includegraphics[clip, trim=0.40cm 0.5cm 0.45cm 0.3cm, width=0.95\linewidth]{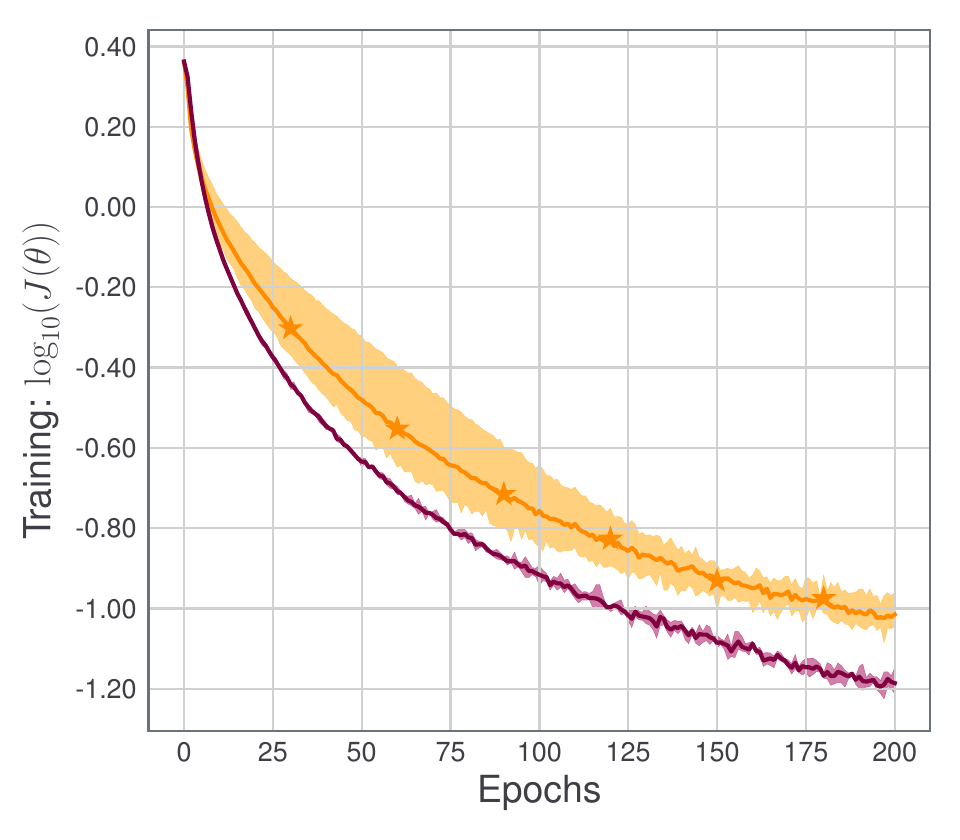}
 				\end{minipage}%
 				\begin{minipage}[c]{0.33\textwidth}
 				\centering
 						\includegraphics[clip, trim=0.40cm 0.5cm 0.35cm 0.3cm, width=0.95\linewidth]{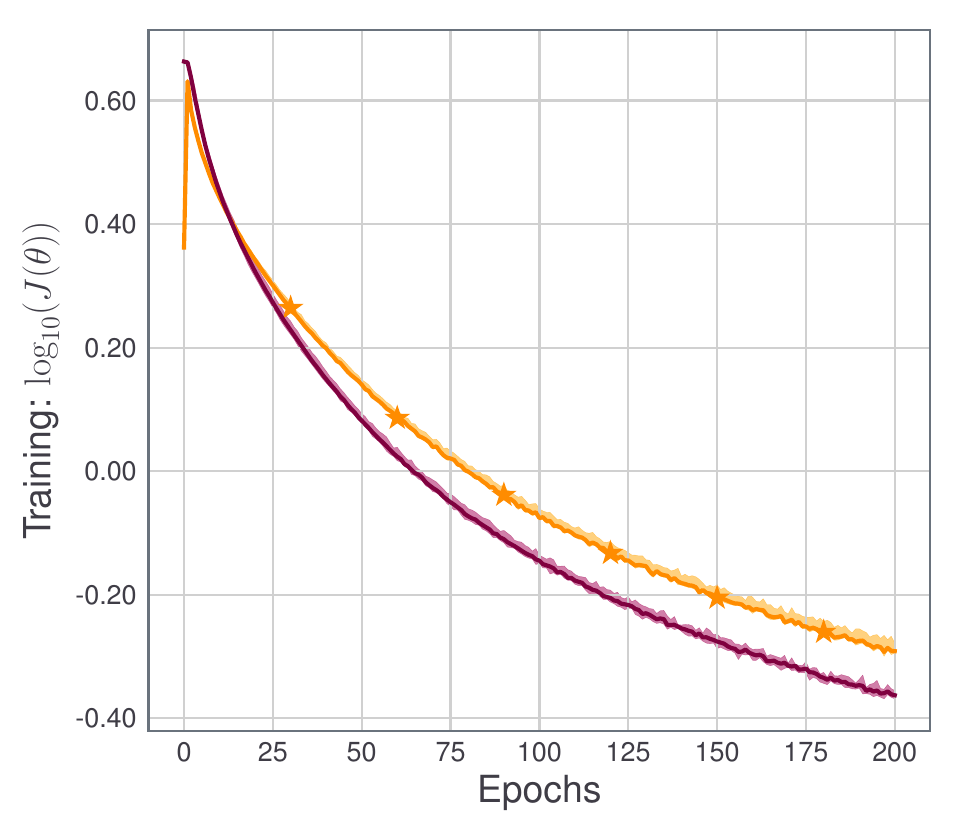}
 				\end{minipage}%
 				\begin{minipage}[c]{0.33\textwidth}
 				\centering
 						\includegraphics[clip, trim=0.40cm 0.5cm 0.35cm 0.3cm, width=0.95\linewidth]{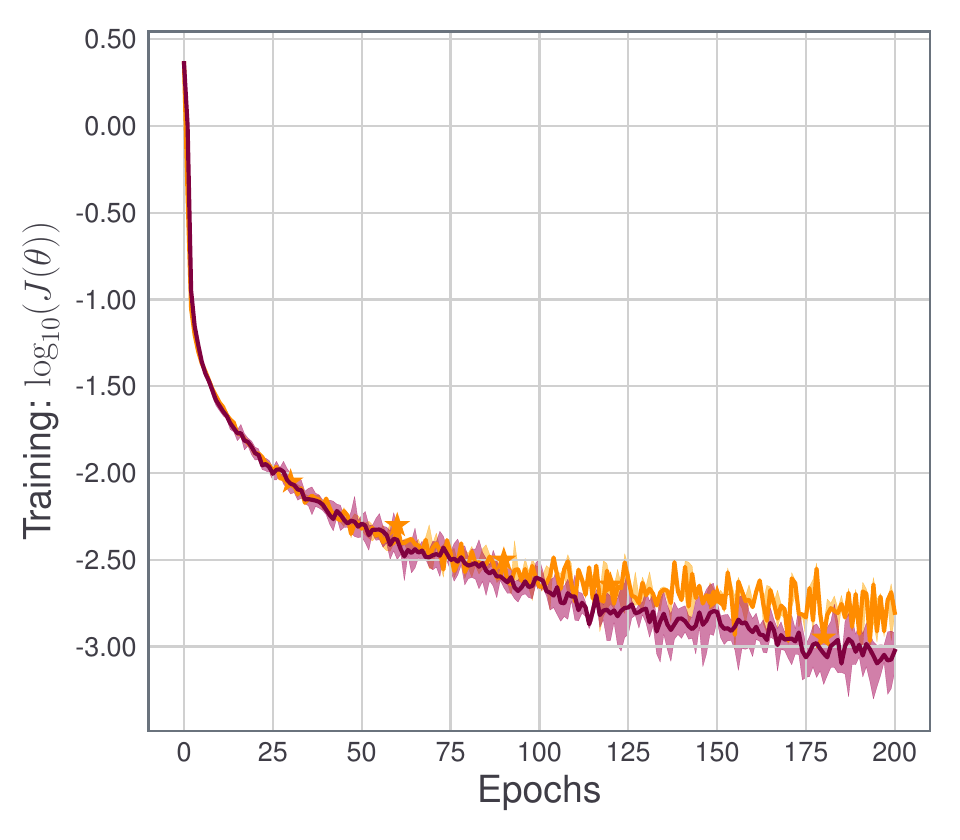}
 				\end{minipage}%

 			 	\begin{minipage}[c]{0.8\linewidth}
 			 	\centering\includegraphics[width=0.4\linewidth]{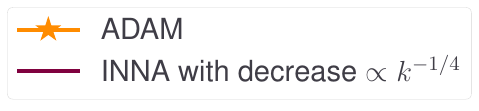}
 			 	\end{minipage}%
 			 	\caption{On top: Training loss of INNA on three image classification problems with various step-size decays. In the legend, $k^{-q}$ means a step-size decay at iteration $k$ of the form $\gamma_k = \gamma_0 k^{-q}$. The bottom row show the comparison between INNA with a well-chosen step-size decay and ADAM.}\label{fig::decay}
 			 	\end{center}
 		\end{figure}

        Finally, let us point out that although ADAM was faster in the experiments of Figure~\ref{fig::vsothalgo}, INNA can outperform ADAM using the slow step-size decay discussed in Section~\ref{sec::comRes}. Indeed, in the previous experiments we used a standard decreasing step-size of the form $\gamma_0 / \sqrt{k+1}$ for simplicity, but Assumption~\ref{ass:mainAssumption} allows for step-sizes decreasing much more slowly. As such, we also considered decays of the form $\gamma_0 (k+1)^{-q}$ with $q\leq 1/2$. The results are displayed on top of Figure~\ref{fig::decay}. 
        Except when $q$ is too small (too slow decay, e.g., $q=1/16$), these results show that some decays slower than $q=1/2$ make INNA a little faster than any of the other algorithms we tried. In particular, with a step-size decay proportional to $k^{-1/4}$, INNA outperforms ADAM (bottom of Figure~\ref{fig::decay}). This suggests that tuning $q$ can also significantly accelerate the training process.

	\section{Conclusion}
	We introduced a novel stochastic optimization algorithm featuring inertial and Newtonian behavior motivated by applications to deep learning. We provided a powerful algorithmic convergence analysis under weak hypotheses applicable to most DL problems. We also provided new general results to study differential inclusions on Clarke subdifferential and obtain convergence rates for the continuous-time counterpart of our algorithm. We would like to point out that, apart from SGD \citep{davis2018stochastic}, the convergence of concurrent methods in such a general setting is still an open question. Our result seems moreover to be the first one to be able to rigorously handle the analysis of mini-batch sub-sampling for ReLU DNNs via the introduction of the $D$-critical points. Our experiments show that INNA is very competitive with state-of-the-art algorithms for DL but also very malleable. We stress that these numerical manipulations were performed on substantial DL benchmarks with only limited algorithm tuning. 
	This facilitates reproducibility and allows staying as close as possible to the reality of DL applications in machine learning.
	
 	\section*{Acknowledgments}
     The authors acknowledge the support of the European Research Council (ERC FACTORY-CoG-6681839), the Agence Nationale de la Recherche (ANR 3IA-ANITI, ANR-17-EURE-0010 CHESS, ANR-19-CE23-0017 MASDOL) and the Air Force Office of Scientific Research (FA9550-18-1-0226).

     Part of the numerical experiments were done using the OSIRIM platform of IRIT, supported by the CNRS, the FEDER, Région Occitanie and the French government (\url{http://osirim.irit.fr/site/en}). We thank the development teams of the following libraries that were used in the experiments: Python \citep{rossum1995python}, Numpy \citep{walt2011numpy}, Matplotlib \citep{hunter2007matplotlib}, Pytorch \citep{paszke2019pytorch}, Tensorflow and Keras \citep{abadi2016tensorflow,chollet2015}.
         
    The authors thank the anonymous reviewers for their comments which helped to improve the paper and thank Hedy Attouch and Sixin Zhang for useful discussions. 

    \DeclareRobustCommand{\VAN}[3]{#3} 
	\bibliography{biblio.bib}

\end{document}